%% file: main.tex
\documentclass{article}
\usepackage{arxiv}
\input{math_commands.tex}

\usepackage{xcolor}
\definecolor{lightblue}{rgb}{0.6, 0.8, 0.9}
\definecolor{darkblue}{HTML}{0c58ac}
\definecolor{darkgreen}{rgb}{0, 0.55, 0.12}
\definecolor{darkred}{rgb}{0.6,0,0}
\definecolor{yellowbg}{RGB}{255,249,224}
\definecolor{greenbg}{RGB}{232,246,232}
\definecolor{bluebg}{RGB}{226,241,255}
\usepackage{natbib}
\usepackage[
  breaklinks=true,
  colorlinks,
  linkcolor=darkred,
  citecolor=darkgreen,
  bookmarks=false
]{hyperref}
\usepackage{url}

\usepackage{tikz}
\usepackage{graphicx}
\usepackage{wrapfig}
\usepackage{booktabs}
\usepackage{adjustbox}
\usepackage{multirow}
\usepackage{xspace}
\usepackage{array}
\usepackage{bm}
\usepackage{bbm}
\usepackage{xcolor}
\usepackage{amsthm, amsmath, amssymb}
\usepackage{cleveref}
\usepackage{algorithm}
\usepackage{algorithmic}

\usepackage{listings}
\usepackage{wrapfig}
\usepackage{caption}
\usepackage{subcaption}
\usepackage{url}
\usepackage{colortbl}
\usepackage{adjustbox}
\usepackage{makecell}
\usepackage{pifont}
\usepackage{tcolorbox}
\usepackage{setspace}
\usepackage{fancybox}
\usepackage{tocloft}
\usepackage{etoc}
\usepackage{enumitem}
\usepackage{pifont}
\usepackage{bm}
\definecolor{darksilver}{RGB}{160,160,160} 
\definecolor{mymauve}{rgb}{0.58,0,0.82}
\newlength{\mysize}

\input{macro-math.tex}

\title{Omnimodal Dataset Distillation via\\ High-order Proxy Alignment}

\author{
Yuxuan Gao$^{1,2*}$ \quad 
Xiaohao Liu$^{3}$\thanks{These authors contributed equally to this work and are listed alphabetically according to their last names.} \quad 
Xiaobo Xia$^{1}$\thanks{Corresponding author (xiaoboxia@ustc.edu.cn).} \quad
Tongliang Liu$^{4}$\\
$^1$University of Science and Technology of China \quad
$^2$Xidian University \\
$^3$National University of Singapore \quad 
$^4$The University of Sydney\\
}

\begin{document}

\maketitle

\begin{abstract}
\input{sec/0_abstract}
\end{abstract}
\addtocontents{toc}{\protect\setcounter{tocdepth}{-1}}

\input{sec/1_intro}

\input{sec/2_related_work}

\input{sec/3_method}
\input{sec/4_experiments}
\input{sec/5_conclusion}

\clearpage
\onecolumn

\renewcommand{\cftsecfont}{\normalsize} 
\renewcommand{\cftsubsecfont}{\normalsize} 
\renewcommand{\cftbeforesecskip}{12pt}      
\renewcommand{\cftbeforesubsecskip}{12pt}

\renewcommand{\contentsname}{Appendix}
\addtocontents{toc}{\protect\setcounter{tocdepth}{2}}

  \appendix
  \tableofcontents

    \clearpage
\input{sec/appendix}

\clearpage
\bibliography{main}
\bibliographystyle{unsrt}

\end{document}

%% file: math_commands.tex
\usepackage{amsmath,amsfonts,bm}

\def\eqref#1{equation~\ref{#1}}

\def\1{\bm{1}}

\def\vu{{\bm{u}}}
\def\vv{{\bm{v}}}

\def\vx{{\bm{x}}}

\def\vz{{\bm{z}}}

\def\mG{{\bm{G}}}

\def\mS{{\bm{S}}}

\DeclareMathAlphabet{\mathsfit}{\encodingdefault}{\sfdefault}{m}{sl}
\SetMathAlphabet{\mathsfit}{bold}{\encodingdefault}{\sfdefault}{bx}{n}

\def\gD{{\mathcal{D}}}

\def\gL{{\mathcal{L}}}
\def\gM{{\mathcal{M}}}



%% file: macro-math.tex
\allowdisplaybreaks

\newtheorem{thm}{Theorem}
\newtheorem{lem}[]{Lemma}

\theoremstyle{definition}

\newtheorem{assumption}[thm]{Assumption}

%% file: sec/0_abstract.tex
Dataset distillation compresses large-scale datasets into compact synthetic sets while preserving training performance, but existing methods are largely restricted to single-modal or bimodal settings. Extending dataset distillation to scenarios involving more than two modalities, \textit{i.e.}, Omnimodal Dataset Distillation, remains underexplored and challenging due to increased heterogeneity and complex cross-modal interactions.
In this work, we identify the key determinant that bounds the endpoint discrepancy in the omnimodal setting, which is exacerbated with an increasing number of modalities. To this end, we propose HoPA, a unified method that captures high-order cross-modal alignments via a compact proxy, which is compatible with trajectory matching as well. By abstracting omnimodal alignment with a shared similarity structure, our method avoids the combinatorial complexity of pairwise modality modeling and enables scalable joint distillation across heterogeneous modalities. Theoretical analysis from the spectral perspective reveals the rationality of our proposed method against bimodal dataset distillation techniques.
Extensive experiments on various benchmarks demonstrate that the proposed method achieves superior compression–performance trade-offs compared to existing competitors. The source code will be publicly released.

%% file: sec/1_intro.tex
\section{Introduction}
\label{sec:intro}
While the abundance of large-scale data has been a cornerstone of recent AI breakthroughs~\cite{kaplan2020scaling,team2023gemini,team2025kimi,achiam2023gpt,bahri2024explaining}, it is increasingly recognized that modern datasets often harbor a high degree of informational redundancy~\cite{sorscher2022beyond,lin2024not,lee2024coreset,xia2024rethinking,zhang2025survey}. 
Dataset Distillation, which has emerged as a particularly promising direction, condenses a large-scale dataset into a compact synthetic subset that preserves the critical information required for effective model training~\cite{wang2018dataset,zhao2020dataset,yu2025boost}. Unlike traditional pruning techniques~\cite{huang2024optimal,he2024large,zhang2023selectivity}, dataset distillation typically leverages optimization to generate synthetic data, achieving extremely high data compression while maintaining competitive performance, particularly in the context of large models and massive datasets~\cite{geng2023survey,liu2025evolution,li2026fixed}.

Most existing works of dataset distillation are limited to single-modal data, including vision~\cite{ding2025condensing,choi2026prism,dong2025high,shao2024elucidating}, text~\cite{nguyen2025synthetic,lu2025unidetox,shen2025condenselm}, and etc. 
More recently, a small number of studies have begun to extend vanilla dataset distillation to multimodal settings~\cite{xu2024low,zhao2025efficient,zhang2025beyond}, typically involving \textit{two} modalities, \textit{i.e.}, vision and text.
These works adopt trajectory matching to optimize a compact synthetic set by minimizing the discrepancy between a student model's parameter (endpoint) update path and a pre-recorded teacher trajectory derived from real data. 
Meanwhile, cross-modal similarity is highlighted as a vital abstraction within this paradigm, transitioning dataset distillation from simple data compression to the preservation of inter-modal alignments.
Despite the promising results, these studies are still confined to bimodal settings. 
Their specification on pairwise alignment limits the extension to complex scenarios involving more than two modalities in a unified and holistic manner. To this end, we formalize this frontier as Omnimodal Dataset Distillation, which necessitates the preservation across an arbitrary number of modalities. 

Distilling datasets that involve more than two modalities is inherently challenging. As the number of modalities increases, the heterogeneity of feature spaces, statistical properties, and semantic granularity across modalities becomes more pronounced~\cite{cicchetti2025gramian,liu2025principled}, making it difficult to construct a compact synthetic set that simultaneously preserves modality-specific information and cross-modal dependencies. Moreover, high-order interactions among multiple modalities introduce combinatorial complexity, which substantially complicates alignment, optimization, and consistency constraints during distillation. These factors collectively render dataset distillation beyond two modalities a non-trivial and largely underexplored problem, calling for the advent of Omnimodal Dataset Distillation techniques. 

In this paper, to address this research gap, we propose a unified omnimodal dataset distillation method via \textbf{H}igh-\textbf{o}rder \textbf{P}roxy \textbf{A}lignment (\textbf{HoPA}) that explicitly targets datasets with more than two modalities.
We first identify an overlooked factor in the alignment objective that determines the upper bound of the \textit{discrepancy} between the teacher model and optimized student endpoint under the omnimodal setting. An increased number of modalities introduces more mismatch terms into this bound, making omnimodal trajectory matching more challenging. This motivates a specialized design for omnimodal dataset distillation.
The inspired operationalization is to decouple omnimodal correspondence modeling from pairwise modality interactions by constructing a compact proxy that captures high-order cross-modal relationships among all modalities. 
This proxy is derived from multimodal similarity reconstruction by the spectral analysis and is naturally compatible with trajectory matching. Moreover, this design avoids the combinatorial explosion induced by enumerating all modality pairs, enabling efficient and scalable distillation across heterogeneous modalities while preserving both modality-specific information and global cross-modal consistency.
We further provide a theoretical analysis on spectral selectivity to validate the effectiveness of our proposed method compared to dataset distillation techniques with pairwise objectives. Empirically, performance comparison against the state-of-the-art (SOTA) baselines across diverse omnimodal datasets and data ratios demonstrates superior efficacy and efficiency. Generalization is validated across different architectures, while ablation studies and further analysis reveal the rationality of our proposed method.
Before delving into details, we summarize our contributions as follows:
\begin{itemize}
    \item Conceptually, we identify and study, for the first time, the problem of omnimodal dataset distillation for data involving more than two modalities, highlighting a critical limitation of existing single-modal and bimodal dataset distillation paradigms.
    \item Technically, building upon the analysis of endpoint discrepancy, we propose HoPA, a unified omnimodal dataset distillation method that captures high-order cross-modal relationships via a compact proxy, enabling scalable and effective distillation without relying on exhaustive pairwise modality interactions. Theoretical comparison against the bimodal dataset distillation proves a tighter trajectory bound introduced by our design.
    \item Empirically, we conduct extensive experiments on omnimodal benchmarks to demonstrate the effectiveness and scalability of the proposed method. It consistently achieves better compression-performance trade-offs than existing baselines. Detailed ablations and case studies are also provided.
\end{itemize}

%% file: sec/2_related_work.tex
\section{Related Work}
\label{sec:related_work}
In this section, we review recent literature related to this
work, including dataset distillation and omnimodal alignment. 
\subsection{Dataset Distillation}
Dataset distillation seeks to compress a large-scale dataset into a compact synthetic set that can effectively substitute the original data for model training, while preserving comparable performance~\citep{wang2018dataset,yu2023dataset,lei2023comprehensive,fang2026knowledge,zhong2025hierarchical,cui2025optical,su2026diffusion}. Early work has mainly focused on single-modal data distillation, with a particular emphasis on image data. These methods can be broadly categorized into several representative paradigms. 
Gradient matching methods optimize synthetic samples by aligning the training gradients induced by real and synthetic data~\citep{zhao2020dataset,lee2022dataset,kim2022dataset}.
Trajectory matching methods instead preserve the optimization dynamics of training on the original dataset by matching parameter trajectories or intermediate model states~\citep{cazenavette2022dataset,guo2024towards,zhong2025towards}.
Distribution matching methods align feature-level or statistical distributions between real and distilled data~\citep{liu2025dataset,cui2025optical,zhao2023dataset}.
Meta-learning–based methods formulate the distilled set as learnable data optimized for fast adaptation or strong generalization across training episodes~\citep{nguyen2020dataset,such2020generative,deng2022remember,zhou2022dataset,chen2024provable}.
Generative methods have introduced pretrained or trainable generators, such as generative adversarial networks~(GANs) and diffusion models, to parameterize the distilled data and improve cross-architecture generalization~\citep{cazenavette2023generalizing,gu2024efficient,su2024d}.

Recent efforts have attempted to extend dataset distillation to multimodal scenarios, \textit{e.g.}, vision-language~\citep{wu2023vision,xu2024low,lee2025covmatch} and vision-audio settings~\citep{kushwaha2024audio,li2025decoupled}. However, existing multimodal distillation methods typically handle only two modalities at a time. Extending them to more than two modalities often requires repeated or pairwise distillation, which is inefficient and poorly scalable. This limitation motivates our work, which performs joint distillation of multiple modalities in a single unified process.

\subsection{Omnimodal Alignment}
Omnimodal alignment has emerged as a fundamental problem in multimodal learning, to learn coherent and consistent representations across a diverse set of modalities. 
Representative works in this direction include ImageBind~\citep{girdhar2023imagebind}, LanguageBind~\citep{zhu2024languagebind}, and more recent studies on scalable and principled cross-modal representation learning~\citep{cicchetti2025gramian,wang2024omnibind,liu2025principled,luo2025next,zhang2023universal,zong2024self,jiang2026representation,jin2024hierarchical,li2026omni}.
These methods typically seek to map heterogeneous inputs into a shared embedding space in which semantically related samples from different modalities are well aligned. 
Such an alignment provides a strong foundation for cross-modal transfer, retrieval, and joint understanding.

Different from prior omnimodal alignment methods, which are mainly concerned with representation learning from full training datasets, our work leverages omnimodal alignment as a surrogate objective for dataset distillation. 
The central goal is not merely to learn a shared representation space, but to preserve the cross-modal relational structure of the original omnimodal dataset using a highly compact distilled set.

%% file: sec/3_method.tex
\section{Preliminary}
\label{sec:preliminary}
\textbf{Omnimodal similarity.}
Given an input instance with multiple modalities, we denote by $E_m(\cdot)$ the encoder for modality $m$ and by
$\vz_m = E_m(\vx_m) \in \mathbb{R}^d$ the corresponding representation.
Following common practice, all representations are $\ell_2$-normalized such that $\|\vz_m\|_2 = 1$.
For a bimodal pair $(m_1, m_2)$, the similarity is measured by the cosine similarity, \textit{i.e.},
$s(\vx_{m_1}, \vx_{m_2}) = \langle \vz_{m_1}, \vz_{m_2} \rangle$, which characterizes the alignment strength between the two modalities for the same instance.
The bimodal similarity can be extended to the omnimodal setting by jointly considering all modalities associated with a single instance.
Let $\gM=\{m_1,\ldots,m_k\}$ denote the modality set and
$\vz=[\vz_{m_1},\ldots,\vz_{m_k}]^\top \in \mathbb{R}^{k\times d}$.
The pairwise similarities among modalities can be compactly represented by a Gram matrix, formally represented as: 
\begin{equation}
\mG = \vz\vz^\top
= [\langle \vz_i, \vz_j \rangle]_{i,j\in\gM} .
\end{equation}
The Gram matrix encodes the complete correlation structure of all modalities within one instance and serves as a unified descriptor of omnimodal alignment.

\textbf{Negatives for omnimodal alignment.}
While $\mG$ characterizes positive correspondences across modalities of the same instance, effective alignment learning also requires negative samples.
In the bimodal case, negatives can be obtained by cross-instance similarities
$\langle \vz^{(i)}_{m_1}, \vz^{(j)}_{m_2} \rangle$ for $i\neq j$.
In the omnimodal scenario, directly constructing negatives for all modality pairs leads to a similarity tensor of size $\mathcal{O}(k^2 N^2)$, which is computationally prohibitive.
This observation motivates a compact proxy that preserves discriminative alignment information while avoiding explicit enumeration of all negative modality pairs.

\textbf{Trajectory matching.}
A common paradigm in dataset distillation is to learn a compact synthetic set $\mathcal{D}_{\text{syn}}$ whose induced training dynamics match those induced by the real data~\cite{liu2024dataset,cazenavette2022dataset,lee2024selmatch}.
Concretely, we first train an expert model on real data under an inner objective $\gL_{\text{inner}}$, and store its parameter trajectory in an expert buffer.
During distillation, a teacher segment is sampled from the buffer, where $\boldsymbol{\theta}_0$ denotes the start point, and $\boldsymbol{\theta}_T$ denotes the target parameter after $T$ training steps on real data.
The student model is then initialized from the same start point $\boldsymbol{\theta}_0$ and optimized on the synthetic set ${\mathcal{D}}_{\text{syn}}$ for $t$ steps, yielding a student endpoint $\boldsymbol{\theta}_e^t$.
The synthetic set is learned by minimizing the discrepancy between the student endpoint and the teacher target:
\begin{equation}
\label{eq:traj_matching_prelim}
\gL_{\text{traj}}
=\frac{\|\boldsymbol{\theta}_e^{t}-\boldsymbol{\theta}_T\|_2^2}
{\|\boldsymbol{\theta}_0-\boldsymbol{\theta}_T\|_2^2},
\qquad
\mathcal{D}_{\text{syn}}=\arg\min_{\mathcal{D}_{\text{syn}}}\gL_{\text{traj}}.
\end{equation}
This objective encourages the synthetic data to reproduce the local training trajectory of the real data on the student model. It provides a generic optimization method that can be instantiated with different distillation objectives.

\textbf{Bimodal dataset distillation.}
In the bimodal setting, the goal is not only to compress modality-specific content but also to preserve cross-modal correspondence. 
Existing trajectory-based methods typically optimize the two modality branches jointly using the same synthetic set, while measuring the trajectory discrepancy separately for each encoder branch.
Concretely, building on Eq.~(\ref{eq:traj_matching_prelim}), the bimodal objective can be written as 
\begin{equation}
\label{eq:bi_traj_prelim}
\gL_{\text{bi-traj}}
=
\sum_{m\in\{m_a,m_b\}}
\frac{\|\boldsymbol{\theta}_{e,m}^{t}-\boldsymbol{\theta}_{T,m}\|_2^2}
{\|\boldsymbol{\theta}_{0,m}-\boldsymbol{\theta}_{T,m}\|_2^2},
\end{equation}
where $m$ indexes modality branches, and each term measures the normalized trajectory mismatch between student and expert endpoints for that modality.

However, the pairwise formulation becomes inadequate in the omnimodal setting. 
A direct extension to $k>2$ modalities would require handling a quadratic number of modality pairs, leading to substantially increased optimization complexity.
More importantly, pairwise objectives decompose one omnimodal instance into a collection of independent bimodal relations, and therefore do not explicitly capture the holistic correlation structure jointly shared by all modalities.
In this work, we retain the branch-wise trajectory matching, but replace the pairwise alignment view with a compact omnimodal objective that models all modalities of an instance as a unified whole.

\section{Methodology}\label{sec:method}
In this section, we present our method step by step. We first analyze the theoretical underpinnings of trajectory mismatch, where the discrepancy between teacher and student parameters is upper-bounded by the accumulation of modality-wise gradient mismatches (Sec.~\ref{sec:traj_friendly}). Inspired by this, we introduce HoPA, a method that moves beyond suboptimal pairwise constraints by treating all modalities as a unified whole (Sec.~\ref{sec:pipeline}). We derive a compact, rank-1 semantic proxy from the leading singular components of the Gram matrix, which serves as a tractable representation for omnimodal alignment. This design provably tightens the trajectory-matching bound compared to full-spectrum pairwise objectives (Sec.~\ref{sec:spectral_criterion}). Finally, we provide a comprehensive gradient analysis to elucidate how the proposed objective functions through a synergistic balance of intra-instance centripetal alignment and inter-instance distinctive separation (Sec.~\ref{sec:gradient_analysis}).

\subsection{Inner Objective and Trajectory Mismatch}
\label{sec:traj_friendly}

Trajectory matching provides a generic optimization method for dataset distillation.
Prior work has mainly focused on improving trajectory construction and matching efficiency~\citep{du2023minimizing,liu2024dataset}.
In contrast, we emphasize that the inner training objective $\gL_{\text{inner}}$ serves as a key factor. It determines the gradient field induced by the teacher objective and therefore directly affects the upper bound of the endpoint discrepancy between teacher and student parameters.
To make this explicit, let $\gM$ denote the modality set with $|\gM|=k$, and let
$\boldsymbol{\theta}_r^\text{T}=\{\boldsymbol{\theta}_{r,m}^\text{T}\}_{m\in\gM}$,
$\boldsymbol{\theta}_r^\text{S}=\{\boldsymbol{\theta}_{r,m}^\text{S}\}_{m\in\gM}$.
We consider the rollout, 
\[
\boldsymbol{\theta}_{r+1,m}^\text{T}
=\boldsymbol{\theta}_{r,m}^\text{T}-\eta\,\mathbf{g}_{T,m}(\boldsymbol{\theta}_r^\text{T}),\quad
\boldsymbol{\theta}_{r+1,m}^\text{S}
=\boldsymbol{\theta}_{r,m}^\text{S}-\eta\,\mathbf{g}_{S,m}(\boldsymbol{\theta}_r^\text{S}),\quad r=0,\ldots,n-1,
\]
with shared initialization $\boldsymbol{\theta}_0^\text{T}=\boldsymbol{\theta}_0^\text{S}$.
Here, both $\mathbf{g}_{T,m}$ and $\mathbf{g}_{S,m}$ are gradients induced by the inner objective $\gL_{\text{inner}}$ on real and synthetic data, respectively.

\begin{lem}[Local gradient mismatch upper-bounds endpoint discrepancy]
\label{lem:local_to_segment_main}
Let $\mathbf g_T(\boldsymbol{\theta}) := \{\mathbf g_{T,m}(\boldsymbol{\theta})\}_{m\in\gM}$, $\mathbf g_S(\boldsymbol{\theta}) := \{\mathbf g_{S,m}(\boldsymbol{\theta})\}_{m\in\gM}$, denote the stacked gradient fields over all modality branches, equipped with the block Euclidean norm.
Assume $\mathbf{g}_T$ is locally $L$-Lipschitz along the student rollout, i.e.,
$\|\mathbf{g}_T(\boldsymbol{\theta}_r^\text{S})-\mathbf{g}_T(\boldsymbol{\theta}_r^\text{T})\|
\le L\|\boldsymbol{\theta}_r^\text{S}-\boldsymbol{\theta}_r^T\|,\ \forall r$.
Then the endpoint discrepancy is upper-bounded as
\begin{equation}
\small
\|\boldsymbol{\theta}_n^\text{S}-\boldsymbol{\theta}_n^\text{T}\|
\le
\eta\sum_{r=0}^{n-1}(1+\eta L)^{n-1-r}
\|\mathbf g_S(\boldsymbol{\theta}_r^\text{S})-\mathbf g_T(\boldsymbol{\theta}_r^\text{S})\| =
\eta\sum_{r=0}^{n-1}(1+\eta L)^{n-1-r}\Big(\sum_{m\in\mathcal M}
\|\mathbf g_{S,m}(\boldsymbol{\theta}_r^\text{S})-\mathbf g_{T,m}(\boldsymbol{\theta}_r^\text{S})\|_2^2
\Big)^{1/2}.
\label{eq:graident}
\end{equation}
\end{lem}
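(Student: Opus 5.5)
The plan is a standard discrete Grönwall argument on the error sequence $\boldsymbol{\delta}_r := \boldsymbol{\theta}_r^\text{S}-\boldsymbol{\theta}_r^\text{T}$, measured in the block Euclidean norm over all modality branches. Stacking the per-branch rollouts gives $\boldsymbol{\theta}_{r+1}^\text{T}=\boldsymbol{\theta}_r^\text{T}-\eta\,\mathbf g_T(\boldsymbol{\theta}_r^\text{T})$ and $\boldsymbol{\theta}_{r+1}^\text{S}=\boldsymbol{\theta}_r^\text{S}-\eta\,\mathbf g_S(\boldsymbol{\theta}_r^\text{S})$. Subtracting, we obtain
\[
\boldsymbol{\delta}_{r+1}=\boldsymbol{\delta}_r-\eta\big(\mathbf g_S(\boldsymbol{\theta}_r^\text{S})-\mathbf g_T(\boldsymbol{\theta}_r^\text{S})\big)-\eta\big(\mathbf g_T(\boldsymbol{\theta}_r^\text{S})-\mathbf g_T(\boldsymbol{\theta}_r^\text{T})\big).
\]
The key step is to add and subtract $\mathbf g_T(\boldsymbol{\theta}_r^\text{S})$, i.e.\ the teacher field evaluated at the \emph{student} point. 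This is what makes the mismatch term appear exactly in the form of the statement, and it leaves a remainder that the local Lipschitz assumption controls.

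Taking norms and applying the triangle inequality together with the assumed bound $\|\mathbf g_T(\boldsymbol{\theta}_r^\text{S})-\mathbf g_T(\boldsymbol{\theta}_r^\text{T})\|\le L\|\boldsymbol{\delta}_r\|$ gives the one-step recursion
\[
\|\boldsymbol{\delta}_{r+1}\|\le(1+\eta L)\|\boldsymbol{\delta}_r\|+\eta\,\epsilon_r,\qquad \epsilon_r:=\|\mathbf g_S(\boldsymbol{\theta}_r^\text{S})-\mathbf g_T(\boldsymbol{\theta}_r^\text{S})\|.
\]
Unrolling this from $r=0$ to $n-1$, and using the shared initialization $\boldsymbol{\delta}_0=\mathbf 0$, yields $\|\boldsymbol{\delta}_n\|\le\eta\sum_{r=0}^{n-1}(1+\eta L)^{n-1-r}\epsilon_r$. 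I would verify this by a short induction on $n$: the inductive step multiplies the previous sum by $(1+\eta L)$, which raises each exponent by one, and then appends the new term $\eta\,\epsilon_{n-1}$ with exponent $0$.

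The final equality in the statement holds by definition of the block Euclidean norm on the stacked field: $\|\mathbf v\|=\big(\sum_{m\in\gM}\|\mathbf v_m\|_2^2\big)^{1/2}$, applied to $\mathbf v=\mathbf g_S(\boldsymbol{\theta}_r^\text{S})-\mathbf g_T(\boldsymbol{\theta}_r^\text{S})$.

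There is no real obstacle here. The one point needing care is that the Lipschitz condition is only assumed along the pair of rollout points $(\boldsymbol{\theta}_r^\text{S},\boldsymbol{\theta}_r^\text{T})$. The decomposition must therefore compare $\mathbf g_T$ at exactly those two points, and must not compare $\mathbf g_S$ across points, since no regularity of $\mathbf g_S$ is assumed. The add-and-subtract choice above is what guarantees this.
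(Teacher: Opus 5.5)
Your proposal is correct and matches the paper's proof step for step. You use the same add-and-subtract of $\mathbf g_T(\boldsymbol{\theta}_r^\text{S})$ to split the gap into a data-mismatch term and a Lipschitz term, then the one-step recursion $\|\boldsymbol{\delta}_{r+1}\|\le(1+\eta L)\|\boldsymbol{\delta}_r\|+\eta\,\epsilon_r$, then unrolling from $\boldsymbol{\delta}_0=\mathbf 0$, with the final equality following from the definition of the block Euclidean norm.
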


\begin{tcolorbox}
    [colframe=gray!80, colback=gray!10, boxrule=0.2pt, arc=5pt, boxsep=2.5pt, left=2pt, right=2pt, top=2pt, bottom=2pt] \textbf{Remark.} 
Lemma~\ref{lem:local_to_segment_main} shows that the endpoint discrepancy is upper bounded by the accumulated modality-wise gradient mismatch induced by $\gL_{\text{inner}}$, see Eq.~(\ref{eq:graident}).
Compared with the bimodal case (\textit{i.e.}, $k=2$), increasing $k$ introduces more modality-wise mismatch terms into the bound, making multimodal trajectory matching more challenging in the multimodal setting under a fixed synthetic budget.
Therefore, designing a suitable inner objective is essential for tightening the bound (See Appendix~\ref{app:proof_lemma1} for the full derivation).
\end{tcolorbox}

This analysis suggests two implications for the objective design: 
(1) A preferable inner objective should induce teacher and student gradients that minimize the upper bound of the endpoint discrepancy; (2) This challenge is exacerbated in the multimodal setting, where modality-wise mismatch terms accumulate across branches. 
These observations motivate our design of the inner objective in the following sections, which facilitates gradient matching within the multimodal setting.

\begin{figure*}
    \centering
    \includegraphics[width=\linewidth]{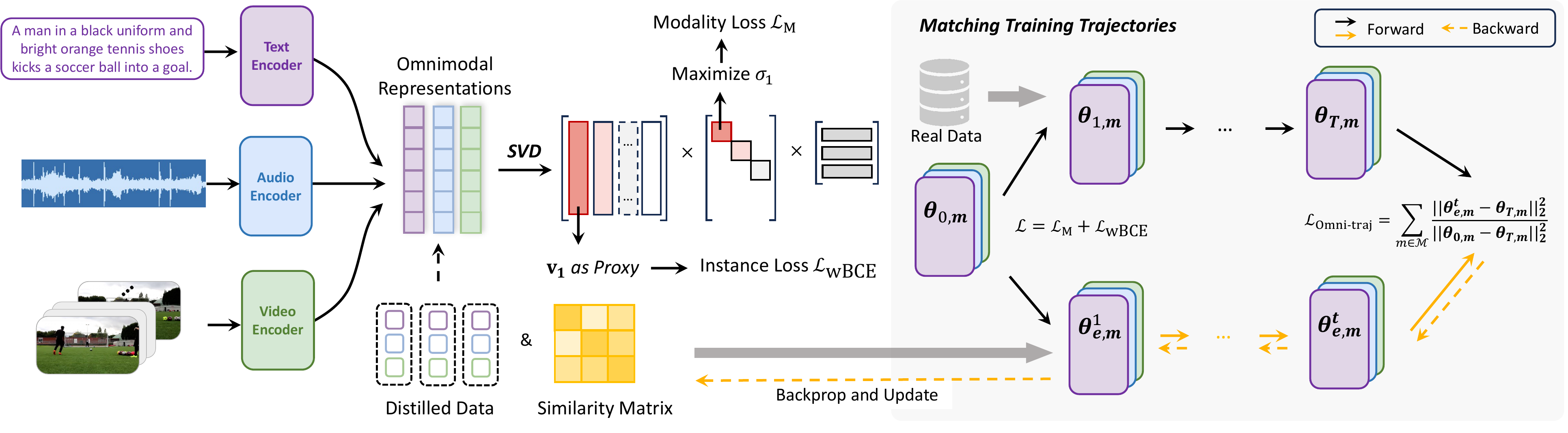}
    \caption{\textbf{The overall framework of omnimodal dataset distillation beyond bimodal data.} 
    Different modal contents are encoded into multimodal representations (\textit{left}).  
    We decompose these representations via SVD, rather than combining pairwise modeling, to handle the increased heterogeneity among modalities. 
    We maximize the leading singular value $\sigma_1$ and utilize the principal right singular vector $v_1$ as a compact low-rank proxy to guide the alignment (\textit{middle}). 
    We distill the dataset into the synthetic set by matching the training trajectories. A shared similarity matrix $\mS_e$ is jointly learned to preserve the multimodal alignment within the distilled data (\textit{right}).}
    \label{fig:framework}
\end{figure*}

\subsection{HoPA: High-order Proxy Alignment for Omnimodal Dataset Distillation}
\label{sec:pipeline}

Motivated by the above analysis, we propose the HoPA that moves beyond pairwise constraints by treating the set of all modalities as a unified whole.
To operationalize this, we introduce a compact proxy derived from the optimal approximation of the Gram matrix. This proxy is capable of preserving the structural alignment of multimodal similarities and is compatible with trajectory matching. This allows for minimal adjustments when extending bimodal trajectory matching to a multimodal context without incurring the computational cost of full matching.
The full procedure is shown in Appendix~\ref{sec:algorithm_flow}.
Furthermore, we provide a comprehensive analysis to guarantee technical rigor and demonstrate the tighter bounds (see Sec.~\ref{sec:spectral_criterion}) achieved by our method through theoretical grounding.

\textbf{Multimodal similarity reconstruction.}
The Gram matrix $\mG$ encodes the pairwise correlations between all modalities, admitting a spectral decomposition
$\mG = \vz\vz^\top
= \sum_{i=1}^{k} \sigma_i^2 \vu_i \vu_i^\top$,
where $\sigma_1 \ge \sigma_2 \ge \cdots \ge 0$ are the singular values of $\vz$, and $\vu_i \in \mathbb{R}^k$ are the corresponding eigenvectors, \textit{i.e.}, the left singular vectors of $\vz$. 

\begin{lem}[Rank-1 optimal approximation~\citep{liu2025principled}]
Let $\mG \in \mathbb{R}^{k \times k}$ be the Gram matrix of the omnimodal representations. The rank-1 matrix $\tilde{\mG} = \sigma_1^2 \vu_1 \vu_1^\top$ is the unique optimal approximation of $\mG$ in terms of the Frobenius norm, satisfying:
\begin{equation}
\tilde{\mG} = \underset{\mathbf{A} \in \mathbb{R}^{k \times k}, \text{rank}(\mathbf{A})=1}{\arg\min} \|\mG - \mathbf{A}\|_F.
\end{equation}
\end{lem}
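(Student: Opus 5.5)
The plan is to reduce the statement to a one-dimensional maximization through an explicit parametrization of rank-$1$ matrices. This gives existence, optimality and uniqueness together. It is a direct proof of the Eckart--Young theorem specialized to the symmetric positive semidefinite matrix $\mG=\vz\vz^\top$, so the cited result of~\citep{liu2025principled} need not be invoked.

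\textbf{Parametrization.} Any $\mathbf A\in\mathbb R^{k\times k}$ with $\mathrm{rank}(\mathbf A)=1$ can be written as $\mathbf A=s\,\mathbf p\mathbf q^\top$ with $s>0$ and unit vectors $\mathbf p,\mathbf q\in\mathbb R^k$. Expanding the Frobenius norm gives
\begin{equation*}
\|\mG-\mathbf A\|_F^2=\|\mG\|_F^2-2s\,\mathbf p^\top\mG\mathbf q+s^2 .
\end{equation*}
For fixed $(\mathbf p,\mathbf q)$ I minimize over $s>0$. If $\mathbf p^\top\mG\mathbf q>0$, the minimizer is $s^\star=\mathbf p^\top\mG\mathbf q$ and the value is $\|\mG\|_F^2-(\mathbf p^\top\mG\mathbf q)^2$. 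Otherwise the infimum is at least $\|\mG\|_F^2$, and the sign flip $\mathbf p\mapsto-\mathbf p$ shows such directions are never better. So the problem reduces to maximizing $\mathbf p^\top\mG\mathbf q$ over unit vectors.

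\textbf{Spectral step.} Write $\mG=\sum_i\sigma_i^2\vu_i\vu_i^\top$ with orthonormal $\vu_i$. By Cauchy--Schwarz,
\begin{equation*}
\mathbf p^\top\mG\mathbf q\le\|\mathbf p\|_2\|\mG\mathbf q\|_2\le\sigma_1^2 .
\end{equation*}
Equality in the first inequality forces $\mathbf p\parallel\mG\mathbf q$. Equality in the second forces $\mathbf q$ into the top eigenspace of $\mG$. Together, equality holds iff $\mathbf p=\mathbf q=\pm\vu_1$, assuming $\sigma_1>\sigma_2$. The minimal error is therefore $\|\mG\|_F^2-\sigma_1^4=\sum_{i\ge2}\sigma_i^4$. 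It is attained by $s^\star\mathbf p\mathbf q^\top=\sigma_1^2\vu_1\vu_1^\top=\tilde\mG$, where the sign ambiguity cancels in the outer product. This attainer is genuinely rank $1$ because $\sigma_1>0$: the rows $\vz_m$ are unit vectors, so $\mathrm{tr}\,\mG=k>0$.

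\textbf{Main obstacle: uniqueness.} I expect the delicate step to be uniqueness, which is false when $\sigma_1=\sigma_2$. In that case any unit vector in the top eigenspace gives a distinct optimal $\sigma_1^2\mathbf w\mathbf w^\top$. I would therefore state explicitly that uniqueness holds under the spectral gap $\sigma_1>\sigma_2$, which is presumably the implicit assumption behind the lemma. Under that gap, the equality analysis above pins down $\mathbf p\mathbf q^\top=\vu_1\vu_1^\top$ and $s=\sigma_1^2$, so $\tilde\mG$ is the unique minimizer. Without the gap, the argument still shows $\tilde\mG$ is \emph{an} optimal rank-$1$ approximation.
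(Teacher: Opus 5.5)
Your proof is correct, but it takes a different route from the paper. The paper gives no argument for this lemma: it imports the statement from the cited work, so it implicitly rests on the Eckart--Young--Mirsky theorem applied to $\mG$, whose singular values are $\sigma_i^2$. You instead prove the rank-$1$ case directly. You parametrize $\mathbf A=s\,\mathbf p\mathbf q^\top$, complete the square in $s$, and bound $\mathbf p^\top\mG\mathbf q\le\sigma_1^2$ via Cauchy--Schwarz and the operator norm. Your equality analysis uses that $\mG$ is symmetric positive semidefinite, so the optimal left and right factors coincide.

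The citation buys brevity and the general rank-$r$ statement. Your route buys a self-contained argument whose equality case shows exactly when uniqueness holds. That caveat is a genuine correction to the lemma, not a weakness of your proof. Without $\sigma_1>\sigma_2$ the uniqueness claim is false: if the $k$ modality embeddings are mutually orthogonal (possible whenever $d\ge k$), then $\mG=\mathbf I_k$ and every $\mathbf w\mathbf w^\top$ with $\|\mathbf w\|_2=1$ is optimal. In that case $\vu_1$ is not even well defined. Eckart--Young--Mirsky itself guarantees uniqueness only when $\sigma_1^2>\sigma_2^2$, so the lemma should carry the gap assumption, as you propose.

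One small tightening: you do not need the sign case split. With $c=\mathbf p^\top\mG\mathbf q$ you have $\|\mG-\mathbf A\|_F^2=\|\mG\|_F^2-c^2+(s-c)^2\ge\|\mG\|_F^2-\sigma_1^4$, since $|c|\le\sigma_1^2$ for either sign of $c$. Equality then forces $s=c$ and $|c|=\sigma_1^2$, hence $s=c=\sigma_1^2$ by $s>0$. This also removes the slightly loose claim that the flip $\mathbf p\mapsto-\mathbf p$ disposes of the case $c=0$.
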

\begin{tcolorbox}
    [colframe=gray!80, colback=gray!10, boxrule=0.2pt, arc=5pt, boxsep=2.5pt, left=2pt, right=2pt, top=2pt, bottom=2pt] \textbf{Remark.} 
Accordingly, the optimal rank-1 approximation under the Frobenius norm is given by
$\sigma_1^2 \vu_1 \vu_1^\top$.
When representations from different modalities are well aligned, the leading eigenvalue dominates, and the reconstruction error becomes negligible,  \textit{i.e.}, $\|\mG - \sigma_1^2\vu_1^\top\vu_1\|_F \to 0$.
\end{tcolorbox}

Hinged with the omnimodal distillation setting, we propose to extract the omnimodal alignment holistically, rather than the direct distillation from multiple pair-wise similarities. Therefore, we derive the following lemma. 
\begin{lem}[Modality-feature duality]
\label{lem:duality}
Let $\vv_1 \in \mathbb{R}^d$ be the principal right singular vector of $\vz$. The relationship between the modality alignment vector $\vu_1$ and the unified semantic feature $\vv_1$ is given by: $\sigma_1 \vv_1 = \vz^\top \vu_1 = \sum_{m \in \gM} [\vu_1]_m \vz_m$.
\end{lem}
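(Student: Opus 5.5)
The plan is to read the claim directly off the singular value decomposition of $\vz \in \mathbb{R}^{k\times d}$. We write $\vz = \sum_{i=1}^{r} \sigma_i \vu_i \vv_i^\top$, where $r=\mathrm{rank}(\vz)$, the left singular vectors $\{\vu_i\}\subset\mathbb{R}^k$ are orthonormal, the right singular vectors $\{\vv_i\}\subset\mathbb{R}^d$ are orthonormal, and $\sigma_1\ge\cdots\ge\sigma_r>0$. This is the same decomposition that underlies the spectral form $\mG=\vz\vz^\top=\sum_i\sigma_i^2\vu_i\vu_i^\top$ used before the rank-1 approximation lemma. Hence $\vu_1$ in that lemma is exactly the left singular vector paired with the principal right singular vector $\vv_1$.

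First, transpose the decomposition to get $\vz^\top=\sum_i \sigma_i \vv_i\vu_i^\top$. Applying this to $\vu_1$ and using $\vu_i^\top\vu_1=\delta_{i1}$ gives
\begin{equation*}
\vz^\top\vu_1=\sigma_1\vv_1 .
\end{equation*}
Second, expand the matrix--vector product using the row structure $\vz=[\vz_{m_1},\ldots,\vz_{m_k}]^\top$. We have $\vz^\top=[\vz_{m_1},\ldots,\vz_{m_k}]$ as a $d\times k$ matrix, so $\vz^\top\vu_1=\sum_{m\in\gM}[\vu_1]_m\vz_m$. This yields the second equality.

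The only delicate point is pairing and sign. Singular vectors are determined only up to sign, and also up to rotation within a repeated singular value. So we must fix $\vv_1$ to be the partner of $\vu_1$ in a common SVD, equivalently define $\vv_1:=\vz^\top\vu_1/\sigma_1$ when $\sigma_1>0$. Then $\vv_1$ is a unit vector, since $\|\vz^\top\vu_1\|^2=\vu_1^\top\mG\vu_1=\sigma_1^2$, and it is a top eigenvector of $\vz^\top\vz$, since $\vz^\top\vz\,\vz^\top\vu_1=\vz^\top\mG\vu_1=\sigma_1^2\vz^\top\vu_1$.

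The degenerate case $\sigma_1=0$ cannot occur, because the rows of $\vz$ are $\ell_2$-normalized and so $\vz\neq 0$. Beyond stating this consistency convention, I expect no real obstacle.
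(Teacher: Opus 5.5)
Your proof is correct and takes essentially the paper's approach: the paper states this lemma without a separate proof and treats it as the standard SVD identity $\vz^\top\vu_1=\sigma_1\vv_1$, combined with the column expansion $\vz^\top\vu_1=\sum_{m}[\vu_1]_m\vz_m$, exactly as you do. Your remark that $\vv_1$ must be the partner of $\vu_1$ in a common SVD (equivalently $\vv_1:=\vz^\top\vu_1/\sigma_1$, which is well defined because $\sigma_1>0$ for unit-norm rows) usefully fixes a sign and pairing convention that the paper leaves implicit.
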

\begin{tcolorbox}[colframe=gray!80, colback=gray!10, boxrule=0.2pt, arc=5pt, boxsep=2.5pt, left=2pt, right=2pt, top=2pt, bottom=2pt] \textbf{Remark.} 
This duality implies that $\vu_1$ acts as an optimal weight vector that aggregates diverse modality representations. $\vv_1$ serves as a weighted aggregation of all modality representations $\{\vz_m\}$. Unlike simple averaging, this spectral aggregation automatically assigns higher importance to modalities that are more consistent with the dominant semantic direction. To this end, we propose using the proxy for the dataset distillation, adopting one global scale for the multimodal similarity rather than a full Gram matrix, yielding omnimodal trajectory matching as well.
\end{tcolorbox}

\textbf{\ding{172} Eigenvector as proxy.}
Rather than operating on the full Gram matrix, we adopt the leading eigenvector as a low-dimensional proxy for omnimodal similarity. And we derive the following proposition.
Let $\vv_1$ denote the right singular vector associated with $\sigma_1$.
For two instances indexed by $i$ and $j$, we approximate their omnimodal similarity by
\begin{equation}
\tilde{s}(\vx_i,\vx_j) = \vv_1^{(i)\top} \vv_1^{(j)}.
\end{equation}
This approximation also compresses the original $\mathcal{O}(k^2)$ modality interactions into a single scalar similarity while preserving the dominant alignment structure.
As a result, large multimodal similarity tensors are reduced to a tractable instance-level similarity matrix.

\textbf{\ding{173} Omnimodal trajectory matching.}
Using the eigenvector-based proxy, we match the training trajectory induced by synthetic data to that of real data.
Specifically, we construct a proxy similarity matrix $\vv_1^{(i)\top} \vv_1^{(j)}$ for different instances $i,j$ and enforce consistency between the optimization dynamics driven by synthetic and real samples.
This trajectory matching principle ensures that the distilled data induces gradient updates aligned with those of the original multimodal dataset. We formulate multimodal distillation as a bilevel optimization problem.
Let $\boldsymbol{\theta}_T$ denote the model parameters trained on real data and $\boldsymbol{\theta}_e$ those trained on synthetic data $(\gD_{\text{syn}},\mS_e)$, where $\mS_e$ is the synthetic similarity matrix that follows the same structure as the eigenvector-based proxy.
We retain the branch-wise trajectory matching used in bimodal dataset distillation, and extend it to the omnimodal setting over all modality branches.
\begin{equation}
\label{eq:omni_traj}
\gL_{\text{omni-traj}}
=
\sum_{m\in\gM}
\frac{\|\boldsymbol{\theta}_{e,m}^{t}-\boldsymbol{\theta}_{T,m}\|_2^2}
{\|\boldsymbol{\theta}_{0,m}-\boldsymbol{\theta}_{T,m}\|_2^2},
\quad
(\gD_{\text{syn}},\mS_e)
=
\arg\min_{\gD_{\text{syn}},\mS_e}
\gL_{\text{omni-traj}},
\end{equation}
Model training is driven by a combined objective consisting of modality-level contrastive learning and instance-level alignment regularization:
\begin{equation}
\label{eq:inner_target}
\gL = \gL_\mathrm{M}+ \gL_{\mathrm{wBCE}}(\mS_e),\quad \gL_\mathrm{M}
= -\frac{1}{N}\sum_{i=1}^N
\log \frac{\exp(\sigma_1^{(i)}/\tau)}{\sum_\ell \exp(\sigma_\ell^{(i)}/\tau)}.
\end{equation}
The weighted binary cross-entropy loss $\gL_{\mathrm{wBCE}}$ enforces consistency between $\mS_e$ and the proxy similarities derived from $\vv_1$,
thereby aligning instance-level gradients across real and synthetic data. Specifically, we follow~\cite{xu2024low} to implement $\gL_{\mathrm{wBCE}}$, but instead use the proposed proxies: 
{\small
\begin{align}
    \gL_{\text{wBCE}}(\mS_e) =& \frac{1}{|\{s_{ij}> \beta\}|} \sum_{i,j: s_{ij}> \beta} \ell\left(s_{ij}, \mathrm{sigmoid}(\vv_1^{(i)\top}\vv_1^{(j)}/\tau')\right) +\frac{1}{|\{s_{ij}\leq \beta\}|} \sum_{i,j: s_{ij}\leq \beta} \ell\left(s_{ij}, \mathrm{sigmoid}(\vv_1^{(i)\top}\vv_1^{(j)}/\tau')\right),
\end{align}
}
where $\ell(y,\hat{y}) = -y\log\hat{y} - (1-y)\log(1-\hat{y})$, $\beta$ is the positive/negative threshold and set to 0.5, $s_{ij}$ is the learnable similarity of the $i$-th instance and $j$-th instance. 
Through this gradient-based matching, the distilled data preserves the essential omnimodal alignment behavior of the original dataset.

\subsection{Spectral Selectivity in Trajectory Matching}
\label{sec:spectral_criterion}
Lemma~\ref{lem:local_to_segment_main} shows that trajectory matching quality is affected by the accumulated gradient mismatch, which is in turn determined by the choice of inner objective.
This motivates a more structural question: \textit{Beyond empirical performance, what property of an inner objective makes teacher--student trajectories intrinsically easier to match in the omnimodal regime?}
To answer this, we analyze objectives through the eigenspectrum of the Gram matrix $\mathbf{G}=\mathbf{z}\mathbf{z}^\top=\sum_{j=1}^{k}\lambda_j\mathbf{u}_j\mathbf{u}_j^\top$, where $\lambda_j=\sigma_j^2$. 
This analysis also supports the rationality and effectiveness behind our method design.
The key claim is that spectral selectivity is the critical factor. Pairwise losses are full-spectrum and typically activate all singular modes, while our proposed method optimizes only the dominant mode (through $\sigma_1$ and its proxy).
As a consequence, our method removes mismatch contributions from non-principal modes and yields a tighter certified upper bound on endpoint discrepancy than pairwise objectives whenever high-order modes are non-negligible.

\textbf{Spectral mismatch model.}
Lemma~\ref{lem:local_to_segment_main} reduces the problem of bounding the endpoint discrepancy to controlling the per-step gradient mismatch $\Delta_r = \|\mathbf{g}_S(\boldsymbol{\theta}_r^\text{S}) - \mathbf{g}_T(\boldsymbol{\theta}_r^\text{S})\|_2$ at each rollout step $r$.
We now characterize how $\Delta_r$ depends on the eigenspectrum of the inner objective.
For a spectral objective $\gL=f(\sigma_1,\ldots,\sigma_k)$, standard matrix calculus gives $\nabla_{\mathbf{z}}\sigma_j=\mathbf{u}_j\mathbf{v}_j^\top$ at points of distinct singular values~\citep{ionescu2015matrix}, and by the chain rule 
$\nabla_{\mathbf{z}}f
=\sum_{j=1}^{k}
\frac{\partial f}{\partial\sigma_j}
\mathbf{u}_j\mathbf{v}_j^\top$, 
where $(\partial f/\partial\sigma_j)|_r:=\alpha_{j,r}$ is the sensitivity of the objective to the $j$-th singular value at step $r$.
Let $\varepsilon_{j,r}:=\|\mathbf{u}_j^\text{S}\mathbf{v}_j^{\text{S}\top}-\mathbf{u}_j^\text{T}\mathbf{v}_j^{\text{T}\top}\|_F$ denote the mode-$j$ approximation error at step $r$, the per-step mismatch satisfies,
\begin{equation}
\label{eq:spectral_mismatch_main}
\Delta_r
\;\le\;
C\sum_{j=1}^{k}|\alpha_{j,r}|\,\varepsilon_{j,r},
\end{equation}
where $C>0$ absorbs the encoder Jacobian norm $\|\mathbf{J}\|$ (see Appendix~\ref{app:spectral_mismatch_derivation} for the full derivation).

\textbf{Pairwise losses are full-spectrum objectives.}
For a pairwise objective $\mathcal{L}_{\mathrm{pair}}$, the sensitivity to the $j$-th spectral mode of instance $i$ is measured by the projection of the gradient onto the $j$-th singular direction:
\begin{equation}
\beta_j^{(i)}
:= \left\langle
     \nabla_{\mathbf{z}^{(i)}}\mathcal{L}_{\mathrm{pair}},\;
     \mathbf{u}_j^{(i)}(\mathbf{v}_j^{(i)})^\top\right\rangle_F
 = \mathbf{u}_j^{(i)\top} \bigl(\nabla_{\mathbf{z}^{(i)}}\mathcal{L}_{\mathrm{pair}}\bigr) \mathbf{v}_j^{(i)},
\label{eq:beta_proj}
\end{equation}
where the equality follows from the definition of the Frobenius inner product, and $\mathbf{u}_j^{(i)}(\mathbf{v}_j^{(i)})^\top$ is the rank-1 matrix associated with the $j$-th singular mode of $\mathbf{z}^{(i)}$.
The objective is \textit{full-spectrum} if $\beta_j^{(i)}\neq 0$ for all $j$ in general.
Standard pairwise objectives, including pairwise InfoNCE and pairwise weighted BCE, fall into this class (see Appendix~\ref{app:spectralize_pairwise_losses}), and therefore accumulate gradient mismatch from all active spectral modes.
By contrast, our proposed objective depends on $\mathbf{z}^{(i)}$ only through $\sigma_1^{(i)}$, making $\beta_j^{(i)}=0$ for all $j\geq 2$ by construction.

Based on this, we compare two classes of inner objectives: a \textit{single-mode} objective $A$ that depends on $\mathbf{z}$ only through $\sigma_1$ (ours), and a \textit{full-spectrum} objective $B$ that depends on all $\{\sigma_j\}_{j=1}^k$.
Their induced spectral weights at rollout step $r$ are denoted $\{\alpha^A_{j,r}\}$ and $\{\alpha^B_{j,r}\}$ respectively.

\begin{thm}[Eigenvalue selectivity yields tighter trajectory bounds]
    \label{thm:eigen_selectivity}
    Under Lemma~\ref{lem:local_to_segment_main} and the spectral mismatch model Eq.~(\ref{eq:spectral_mismatch_main}), the endpoint discrepancy satisfies 
    \begin{equation}
    \label{eq:bound_general_main}
    \|\boldsymbol{\theta}_n^\text{S}-\boldsymbol{\theta}_n^\text{T}\|_2
    \;\le\;
    \eta C\sum_{r=0}^{n-1}(1+\eta L)^{n-1-r}
    \sum_{j=1}^{k}|\alpha_{j,r}|\,\varepsilon_{j,r}.
    \end{equation}
    Suppose $\alpha^A_{1,r}=\alpha^B_{1,r}$ for all $r$, with $\alpha^A_{j,r}=0$ for all $j\ge2$, while $\alpha^B_{j,r}\neq0$ for some $j\ge2$.
    Then their certified upper bounds satisfy $U_A \le U_B$, where
    \begin{align}
    U_A = \eta C\sum_{r=0}^{n-1}(1+\eta L)^{n-1-r}
    |\alpha^A_{1,r}|\,\varepsilon_{1,r}, \quad
    U_B = \eta C\sum_{r=0}^{n-1}(1+\eta L)^{n-1-r}
    \sum_{j=1}^{k}|\alpha^B_{j,r}|\,\varepsilon_{j,r}.
    \end{align}
    The inequality is strict whenever $\alpha^B_{j,r}\varepsilon_{j,r}\neq0$ for some $j\ge2$ at some step $r$.
\end{thm}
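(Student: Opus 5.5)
The proof has two parts. First I derive the general bound Eq.~(\ref{eq:bound_general_main}) by composing Lemma~\ref{lem:local_to_segment_main} with the spectral mismatch model. Then I compare the two instantiations term by term.

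\textbf{Step 1: the general bound.} Lemma~\ref{lem:local_to_segment_main} gives
\[
\|\boldsymbol{\theta}_n^\text{S}-\boldsymbol{\theta}_n^\text{T}\|_2\le \eta\sum_{r=0}^{n-1}(1+\eta L)^{n-1-r}\Delta_r ,
\]
where $\Delta_r=\|\mathbf g_S(\boldsymbol{\theta}_r^\text{S})-\mathbf g_T(\boldsymbol{\theta}_r^\text{S})\|$. Every weight $\eta(1+\eta L)^{n-1-r}$ is nonnegative because $\eta>0$ and $L\ge 0$. I can therefore substitute Eq.~(\ref{eq:spectral_mismatch_main}), $\Delta_r\le C\sum_j|\alpha_{j,r}|\varepsilon_{j,r}$, into each summand without reversing the inequality. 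Factoring out $C$ gives Eq.~(\ref{eq:bound_general_main}).

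For the reader's benefit I will recall where Eq.~(\ref{eq:spectral_mismatch_main}) comes from:
\begin{itemize}
\item write $\nabla_{\mathbf z}f=\sum_j\alpha_{j,r}\mathbf u_j\mathbf v_j^\top$ for both teacher and student;
\item assume both share the same sensitivities $\alpha_{j,r}$ at step $r$, as implicitly done in the model;
\item take the difference, apply the triangle inequality mode by mode, and push through the encoder Jacobian with $\|\mathbf J\|\le C$.
\end{itemize}
The theorem takes this model as given, so I will not rederive it.

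\textbf{Step 2: specialise to $A$ and $B$.} Plugging $\{\alpha^A_{j,r}\}$ into Eq.~(\ref{eq:bound_general_main}) and using $\alpha^A_{j,r}=0$ for $j\ge 2$, the inner sum collapses to $|\alpha^A_{1,r}|\varepsilon_{1,r}$, which is exactly $U_A$. Plugging in $\{\alpha^B_{j,r}\}$ gives $U_B$.

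Next I split $U_B$ by mode. Since $\alpha^A_{1,r}=\alpha^B_{1,r}$ for all $r$, the $j=1$ part of $U_B$ equals $U_A$. Hence
\[
U_B-U_A=\eta C\sum_{r=0}^{n-1}(1+\eta L)^{n-1-r}\sum_{j=2}^{k}|\alpha^B_{j,r}|\,\varepsilon_{j,r}.
\]
Each factor in this sum is nonnegative: $\eta,C>0$, $(1+\eta L)^{n-1-r}\ge 1$, $|\alpha^B_{j,r}|\ge 0$, and $\varepsilon_{j,r}\ge 0$ because it is a Frobenius norm. This proves $U_A\le U_B$.

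\textbf{Step 3: strictness.} Suppose $\alpha^B_{j_0,r_0}\varepsilon_{j_0,r_0}\neq 0$ for some $j_0\ge 2$ and some $r_0$. Then the corresponding summand is
\[
\eta C(1+\eta L)^{n-1-r_0}|\alpha^B_{j_0,r_0}|\varepsilon_{j_0,r_0}>0 .
\]
A sum of nonnegative terms with one strictly positive term is strictly positive, so $U_B>U_A$.

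\textbf{Main obstacle.} The only delicate point is bookkeeping around the spectral model. The errors $\varepsilon_{j,r}$ must be treated as fixed quantities shared by $A$ and $B$, so that the two bounds are compared on the same rollout and only the spectral weights differ. I will state this explicitly, since the student rollout in general depends on the objective. The claim concerns certified upper bounds, not the actual discrepancies, so this convention suffices. I also need distinct singular values, which the spectral model already assumes, so that the $\mathbf u_j\mathbf v_j^\top$ are well defined.
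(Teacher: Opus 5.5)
Your proposal is correct and follows the paper's proof essentially step for step: you compose Lemma~\ref{lem:local_to_segment_main} with the spectral mismatch bound using nonnegativity of the weights, then compare $U_A$ and $U_B$. Writing $U_B-U_A$ as a sum of nonnegative mode-$j\ge 2$ terms is equivalent to the paper's per-step inequality, and the paper also implicitly adopts your explicit convention that $\varepsilon_{j,r}$ is shared by the two objectives.
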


\begin{tcolorbox}
    [colframe=gray!80, colback=gray!10, boxrule=0.2pt, arc=5pt, boxsep=2.5pt, left=2pt, right=2pt, top=2pt, bottom=2pt] \textbf{Remark.} 
    Theorem~\ref{thm:eigen_selectivity} establishes eigenvalue selectivity as a structural criterion for trajectory-friendliness: 
    An inner objective is more matchable the fewer singular values of $\mathbf{z}$ it activates.
    Standard pairwise objectives belong to the full-spectrum class and accumulate mismatch from all $k$ modes.
    Our proposed objectives depend on $\mathbf{z}$ only through $\sigma_1$, placing them in the single-mode class with a provably tighter bound.
    The advantage grows with $k$: the full-spectrum bound accumulates $\mathcal{O}(k)$ additional mode terms, whereas the single-mode bound remains independent of $k$, making the gain intrinsically more pronounced in the omnimodal regime. 
    See Appendix~\ref{app:proof_prop_eigen} for the full derivation.
\end{tcolorbox}

Our omnimodal objective in Sec.~\ref{sec:pipeline} is an explicit instantiation of the spectral criterion above.
Specifically, the principal-mode component built on $\sigma_1$ (and its eigenvector proxy) corresponds to objective $A$ in Theorem~\ref{thm:eigen_selectivity}, while standard pairwise formulations correspond to the full-spectrum class $B$.
Therefore, the proposed design inherits a tighter certified trajectory-matching bound than pairwise baselines.
The weighted BCE term in Sec.~\ref{sec:pipeline} plays a complementary role: it preserves instance-level discrimination, while the trajectory-friendliness is primarily governed by the principal-mode part.

\subsection{Gradient Analysis}
\label{sec:gradient_analysis}
To provide theoretical insight into how our proposed objective drives omnimodal alignment and semantic discriminability, we analyze the gradient of the total loss $\gL=\gL_\mathrm{M}+\gL_{\mathrm{wBCE}}$ with respect to the multimodal representation matrix $\vz$.
Our analysis reveals that the optimization process consists of two synergistic forces: an \textit{intra-instance centripetal force} driven by singular value maximization, and an \textit{inter-instance distinctive force} regulated by the proxy.

\textbf{Intra-instance alignment.} 
The modality-level loss $\gL_\mathrm{M}$ aligns different modalities of the same instance. By differentiating the loss to the representation matrix $\mathbf{z}$, we obtain the gradient direction:
\begin{equation}
    \frac{\partial \gL_\mathrm{M}}{\partial \mathbf{z}} = \frac{1}{\tau} \left[ (p_1 - 1)\mathbf{u}_1 \mathbf{v}_1^\top + \sum_{j=2}^k p_j \mathbf{u}_j \mathbf{v}_j^\top \right],  
\end{equation}
where $p_j$ is the softmax probability of the $j$-th singular value.
This result reveals a self-adaptive alignment mechanism: since $(p_1-1)<0$, the first term generates a negative gradient that acts as a centripetal force, pulling all modality embeddings aligned with the principal direction $\mathbf{u}_1$. Meanwhile, the second term ($p_j > 0$ for $j \geq 2$) acts as a penalty, suppressing energy in all other directions. This theoretically guarantees that minimizing $\gL_M$ drives the Gram matrix towards rank-1, achieving full omnimodal alignment.

\textbf{Inter-instance seperation.}
To prevent representation collapse, the instance-level loss $\mathcal{L}_{\text{wBCE}}$ regularizes the shared principal direction $\mathbf{v}_1$. For a specific instance $i$, the gradient to its proxy vector $\mathbf{v}_1^{(i)}$ is:
\begin{equation}
    \frac{\partial \mathcal{L}_{\text{wBCE}}}{\partial \mathbf{v}_1^{(i)}} = \frac{1}{\tau} \sum_{j \neq i} \omega_{ij} (\mathrm{sigmoid}(\tilde{s}_{ij}/\tau') - s_{ij}) \mathbf{v}_1^{(j)}.
\end{equation}
The sign of the alignment error $\mathrm{sigmoid}(\tilde{s}_{ij}/\tau') - s_{ij}$ determines the force direction. For the negative pairs (where $s_{ij}$ is close to $0$), the error term is positive. The gradient points towards $\mathbf{v}_1^{(j)}$, excerting a distinctive force. For the positive pairs (where $s_{ij}$ is close to $1$), the error term is negative. The gradient points opposite to $\mathbf{v}_1^{(j)}$, so the descent step pulls $\mathbf{v}_1^{(i)}$ towards $\mathbf{v}_1^{(j)}$, maintaining semantic consistency. This ensures that the unified representations remain discriminative across different instances.

%% file: sec/4_experiments.tex
\section{Experiments}
\label{sec:exp}
In this section, we conduct experiments to address the following research questions (RQs):
\begin{itemize}
    \item \textbf{RQ1:} How does our method perform compared to existing baselines across different datasets and data ratios? Does our method achieve higher efficiency than others?
    \item \textbf{RQ2:} Can our method generalize to different model architectures, maintaining superior performance under both zero-shot and distilled settings?
    \item \textbf{RQ3:} How does each component, \textit{i.e.}, pairwise \textit{vs.} omniwise distillation and different learning objectives, affect the performance?
    \item \textbf{RQ4:} Is the premise supported by empirical analysis, including the rank-1 Gram matrix and efficiency on SVD? What is the visualization of the distilled data?
\end{itemize}

\subsection{Experimental Setups}

\textbf{Datasets.} We conduct experiments with three datasets, including MSR-VTT~\citep{xu2016msr}, VGGSound-S~\citep{chen2020vggsound}, and DiDeMo~\citep{anne2017didemo}. The three datasets involve three modalities. Specifically, MSR-VTT is a video–caption dataset containing video, audio, and text. VGGSound-S includes synchronized video, audio, and text. DiDeMo provides videos with audio and text annotations. Important statistics of these datasets are summarized in Appendix~\ref{sec:dataset_statistics}.

\textbf{Metrics.} Model performance is evaluated using the top-K retrieval recall (R@K), which measures how often the correct match appears among the top-K retrieved results. Specifically, given a query from one modality, the model retrieves the top-K similar items from the other modality, and the proportion of correct matches is reported as R@K. 

\textbf{Baselines.} We compare HoPA to several coreset selection and dataset distillation methods. 
The coreset selection competitor is \textit{Random}, which randomly selects samples as the coreset. 
Dataset distillation competitors are \textit{MTT-VL}~\citep{wu2023vision}, \textit{TESLA}~\citep{cui2023scaling},  \textit{LoRS}~\citep{xu2024low}, and \textit{RepBlend}~\citep{zhang2025beyond}. \textit{MTT-VL} adapts \textit{MTT} to image-text pairs for distillation. \textit{TESLA} is an efficient implementation of \textit{MTT}, so we also adapt \textit{TESLA} to multimodal data. \textit{LoRS} employs low-rank similarity mining for dataset distillation, while maintaining efficiency and scalability. \textit{RepBlend} addresses modality collapse in multimodal dataset distillation through representation blending and symmetric encoder projection. 
Note that we only involve one coreset selection method. It is because we focus on data distillation in this work. Furthermore, most coreset selection methods are limited to unimodal classification tasks~\citep{xia2023moderate,xia2024refined,zheng2022coverage}, and their performance is not competitive with that of data distillation methods~\citep{wang2022cafe,cui2024mitigating,cazenavette2025dataset,yu2023dataset,lei2023comprehensive,joshi2025dataset}.

\textbf{Implementation details.} We utilize the pretrained encoders of ImageBind~\citep{girdhar2023imagebind} to encode the data of different modalities. Consistent with the original architecture, we retain the modality-specific linear projection heads to map features into a fixed $d$-dimensional embedding space. Notably, for computational efficiency, we freeze the heavy backbone encoders, and only projection heads are optimized during both the distillation and training stages, consistently following~\cite{zhang2025beyond}. Instead of synthesizing raw inputs, we directly synthesize the modality embeddings. We employ LoRS~\citep{xu2024low} as our base distillation method. To generate expert trajectories, we train the network on the full real dataset for 10 epochs, repeated over 20 independent runs.
During the distillation phase, synthetic data is optimized using the SGD optimizer with a momentum of 0.5. We initialize the synthetic modality embeddings by randomly sampling from real data. Detailed hyperparameters, including learning rates and parameter settings across different datasets and synthetic data sizes, are provided in Appendix~\ref{sec:hyperparameters}. All experiments
are conducted on 1$\times$NVIDIA TESLA A100 GPU.

\begin{table*}[!tp]
    \centering
    \caption{\textbf{Experimental results on MSR-VTT.} This table shows the testing retrieval recall of models trained from scratch on the small coreset or synthetic set. ``V'': Video, ``T'': Text, and ``A'': Audio. We report the mean and standard deviation of the results. In each case, the best performance is shown in bold.}
    \resizebox{\linewidth}{!}{
        \begin{tabular}{c|c|ccccccc|ccccccc|c}
        \toprule
        \multirow{2}{*}{\makecell{Triplets\\(Ratio)}}  & \multirow{2}{*}{\makecell{Methods}} & \multicolumn{7}{c|}{R@1~$\uparrow$} &  \multicolumn{7}{c|}{R@5~$\uparrow$} & \multirow{2}{*}{\makecell{Time\\(mins) $\downarrow$}}\\\cmidrule{3-16}
        &        &   V$\rightarrow$T & T$\rightarrow$V & V$\rightarrow$A & A$\rightarrow$V & T$\rightarrow$A & A$\rightarrow$T & Avg. & V$\rightarrow$T & T$\rightarrow$V & V$\rightarrow$A & A$\rightarrow$V & T$\rightarrow$A & A$\rightarrow$T & Avg. & \\\midrule
            
        \multirow{6}{*}{\makecell{100\\(1.11\%)}} 
         & Random   & 29.1$\pm$0.5 & \textbf{37.2$\pm$0.0} & 22.5$\pm$0.1 & 19.9$\pm$0.0 & 6.9$\pm$0.1 & 6.0$\pm$0.1 & 20.3 & 54.1$\pm$0.2 & 60.7$\pm$0.2 & 43.8$\pm$0.1 & 40.0$\pm$0.1 & 17.8$\pm$0.1 & 17.3$\pm$0.0 & 39.0 & -  \\
         & MTT-VL & 34.3$\pm$0.6 & 36.7$\pm$0.8 & 22.4$\pm$0.1 & 19.1$\pm$0.1 & 6.7$\pm$0.5 & 5.8$\pm$0.4 & 20.8 & 59.0$\pm$0.2 & 58.9$\pm$1.2 & 43.3$\pm$0.1 & 39.3$\pm$0.2 & 18.6$\pm$0.6 & 18.4$\pm$0.7 & 39.6 & 75.7  \\
         & TESLA & 34.0$\pm$0.5 & 36.5$\pm$0.8 & 22.4$\pm$0.1 & 19.2$\pm$0.1 & 6.7$\pm$0.3 & 5.8$\pm$0.1 & 20.8 & 58.8$\pm$0.6 & 58.8$\pm$1.0 & 43.1$\pm$0.2 & 39.2$\pm$0.1 & 18.4$\pm$0.2 & 17.6$\pm$0.4 & 39.3 & 75.4 \\
        & LoRS & 36.1$\pm$0.8 & 36.6$\pm$0.3 & 21.8$\pm$0.1 & 20.5$\pm$0.6 & 7.8$\pm$0.1 & \textbf{6.8$\pm$0.5} & \textbf{21.6} & 60.0$\pm$0.4 & 60.4$\pm$1.1 & 43.3$\pm$0.5 & 41.7$\pm$1.1 & 19.8$\pm$0.6 & 19.6$\pm$0.4 & 40.8 & 77.1 \\
        & RepBlend & \textbf{36.2$\pm$0.7} & 36.7$\pm$0.7 & 21.7$\pm$0.4 & 19.8$\pm$0.9 & 7.4$\pm$0.3 & 6.6$\pm$0.4 & 21.4 & 60.7$\pm$0.6 & 60.0$\pm$1.2 & 43.3$\pm$0.1 & 40.6$\pm$0.8 & 19.7$\pm$0.5 & 18.6$\pm$0.3 & 40.5 & 82.0 \\ \cmidrule{2-17}
        & \textbf{HoPA} & {\cellcolor{greenbg}\textbf{36.2$\pm$0.5}} & {\cellcolor{greenbg}35.6$\pm$0.6} & {\cellcolor{greenbg}\textbf{22.6$\pm$0.9}} & {\cellcolor{greenbg}\textbf{21.3$\pm$0.6}} & {\cellcolor{greenbg}\textbf{7.9$\pm$0.3 }}& {\cellcolor{greenbg}6.0$\pm$0.5} & {\cellcolor{greenbg}\textbf{21.6}} & {\cellcolor{greenbg}\textbf{62.0$\pm$0.3}} & {\cellcolor{greenbg}\textbf{62.2$\pm$0.6}} & {\cellcolor{greenbg}\textbf{44.9$\pm$0.2}} & {\cellcolor{greenbg}\textbf{42.5$\pm$0.4}} & {\cellcolor{greenbg}\textbf{21.3$\pm$0.2}} & {\cellcolor{greenbg}\textbf{20.2$\pm$0.4}} & {\cellcolor{greenbg}\textbf{42.2}} & {\cellcolor{bluebg}\textbf{42.8}} \\ \midrule        
          
         \multirow{6}{*}{\makecell{200\\(2.22\%)}} 
         & Random   & 29.3$\pm$0.2 & \textbf{37.1$\pm$0.1} & 22.5$\pm$0.0 & 20.0$\pm$0.0 & 6.9$\pm$0.1 & 6.0$\pm$0.0 & 20.3 & 54.4$\pm$0.2 & 60.8$\pm$0.1 & 43.7$\pm$0.1 & 40.0$\pm$0.1 & 18.0$\pm$0.1 & 17.3$\pm$0.1 & 39.0 & - \\
         & MTT-VL & 31.1$\pm$0.1 & 30.1$\pm$0.9 & 21.0$\pm$1.3 & 20.0$\pm$0.8 & 5.8$\pm$0.2 & \textbf{7.4$\pm$0.6} & 19.2 & 55.3$\pm$0.8 & 57.4$\pm$0.3 & 43.7$\pm$0.5 & 44.7$\pm$0.1 & 18.9$\pm$1.2 & 21.0$\pm$0.7 & 40.2 & 103.6 \\
         & TESLA & 30.7$\pm$0.0 & 30.5$\pm$0.3 & 21.6$\pm$1.0 & 20.2$\pm$0.5 & 5.8$\pm$0.3 & \textbf{7.4$\pm$0.8} & 19.4 & 55.9$\pm$1.0 & 55.3$\pm$1.6 & 43.6$\pm$0.9 & 43.7$\pm$0.7 & 19.5$\pm$0.6 & 20.8$\pm$0.4 & 39.8 & 96.4 \\
         & LoRS & 33.0$\pm$0.5 & 32.7$\pm$0.3 & 22.5$\pm$0.8 & 20.8$\pm$0.3 & 6.0$\pm$0.2 & 7.3$\pm$0.6 & 20.4 & 58.9$\pm$1.0 & 59.4$\pm$0.5 & \textbf{45.2$\pm$1.0} & \textbf{45.7$\pm$0.5} & 19.5$\pm$0.7 & \textbf{21.3$\pm$0.3} & 41.7 & 100.9 \\
         & RepBlend & 33.4$\pm$0.4 & 33.6$\pm$0.2 & 22.1$\pm$0.6 & 20.7$\pm$0.8 & 5.5$\pm$0.6 & 7.2$\pm$0.4 & 20.4 & 57.8$\pm$0.5 & 59.6$\pm$1.0 & 45.1$\pm$0.6 & 45.5$\pm$0.0 & 18.1$\pm$0.3 & 19.6$\pm$0.9 & 41.0 & 97.5 \\ \cmidrule{2-17}
         & \textbf{HoPA} & {\cellcolor{greenbg}\textbf{37.3$\pm$0.1}} & {\cellcolor{greenbg}34.8$\pm$0.5} & {\cellcolor{greenbg}\textbf{23.4$\pm$0.4}} & {\cellcolor{greenbg}\textbf{21.5$\pm$1.0}} & {\cellcolor{greenbg}\textbf{8.0$\pm$0.1}} & {\cellcolor{greenbg}6.2$\pm$0.1} & {\cellcolor{greenbg}\textbf{21.9}} & {\cellcolor{greenbg}\textbf{63.2$\pm$0.1}} & {\cellcolor{greenbg}\textbf{62.4$\pm$0.2}} & {\cellcolor{greenbg}44.8$\pm$0.6} & {\cellcolor{greenbg}42.8$\pm$0.5} & {\cellcolor{greenbg}\textbf{22.2$\pm$0.3}} & {\cellcolor{greenbg}\textbf{20.6$\pm$0.7}} & {\cellcolor{greenbg}\textbf{42.7}} & {\cellcolor{bluebg}\textbf{48.2}} \\ \midrule
            
        \multirow{6}{*}{\makecell{500\\(5.55\%)}} 
         & Random   & 34.1$\pm$0.3 & \textbf{37.3$\pm$0.1} & 22.1$\pm$0.2 & 19.9$\pm$0.0 & 6.9$\pm$0.1 & 6.6$\pm$0.1 & 21.2 & 59.7$\pm$0.5 & 61.2$\pm$0.2 & 43.5$\pm$0.3 & 40.2$\pm$0.2 & 19.0$\pm$0.1 & 17.3$\pm$0.1 & 40.3 & - \\
         & MTT-VL & 34.8$\pm$0.3 & 36.0$\pm$0.5 & 21.9$\pm$0.2 & 20.0$\pm$0.5 & 6.7$\pm$0.1 & 6.6$\pm$0.1 & 21.0 & 60.6$\pm$0.6 & 59.0$\pm$0.6 & 43.9$\pm$0.9 & \textbf{44.8$\pm$0.4} & 19.0$\pm$0.3 & 21.0$\pm$0.2 & 41.4 & 118.3 \\
         & TESLA & 35.1$\pm$0.7 & 36.3$\pm$0.7 & 20.1$\pm$0.2 & 20.2$\pm$0.3 & 6.8$\pm$0.2 & 6.4$\pm$0.2 & 20.8 & 60.3$\pm$0.3 & 59.8$\pm$0.3 & 43.5$\pm$0.9 & 43.2$\pm$1.1 & 19.7$\pm$0.4 & 20.1$\pm$0.4 & 41.1 & 115.7 \\
         & LoRS & 35.9$\pm$0.5 & 36.6$\pm$0.2 & 22.6$\pm$0.5 & 21.2$\pm$0.4 & 6.9$\pm$0.4 & \textbf{7.3$\pm$0.3} & 21.8 & 61.5$\pm$0.5 & 60.6$\pm$1.4 & 45.4$\pm$0.5 & 44.5$\pm$0.8 & 20.4$\pm$0.1 & \textbf{21.9$\pm$0.8} & 42.4 & 120.4 \\
         & RepBlend & 36.1$\pm$0.4 & 36.5$\pm$0.6 & 22.7$\pm$0.6 & 21.2$\pm$0.3 & 6.9$\pm$0.2 & \textbf{7.3$\pm$0.3} & 21.8 & 61.7$\pm$0.4 & 60.3$\pm$0.8 & 45.6$\pm$0.6 & 44.3$\pm$0.6 & 20.1$\pm$0.7 & 20.9$\pm$0.4 & 42.2 & 122.1 \\ \cmidrule{2-17}
         & \textbf{HoPA} &{\cellcolor{greenbg}\textbf{36.3$\pm$0.2}} & {\cellcolor{greenbg}35.1$\pm$0.3} & {\cellcolor{greenbg}\textbf{24.2$\pm$0.2}} & {\cellcolor{greenbg}\textbf{21.9$\pm$0.4}} & {\cellcolor{greenbg}\textbf{8.6$\pm$0.2}} & {\cellcolor{greenbg}6.3$\pm$0.2} & {\cellcolor{greenbg}\textbf{22.1}} & {\cellcolor{greenbg}\textbf{63.4$\pm$0.6}} & {\cellcolor{greenbg}\textbf{62.0$\pm$0.7}} & {\cellcolor{greenbg}\textbf{45.7$\pm$0.3}} & {\cellcolor{greenbg}44.1$\pm$0.9} & {\cellcolor{greenbg}\textbf{21.5$\pm$1.0}} & {\cellcolor{greenbg}\textbf{20.5$\pm$1.0}} & {\cellcolor{greenbg}\textbf{42.9}} & {\cellcolor{bluebg}\textbf{57.6}} \\ \midrule
        
         Full & - & 37.1$\pm$0.1 & 36.2$\pm$0.2 & 24.4$\pm$0.5 & 22.8$\pm$0.1 & 9.0$\pm$0.3 & 6.1$\pm$0.1 & 22.6 & 64.9$\pm$0.1 & 64.7$\pm$0.4 & 47.6$\pm$0.2 & 46.3$\pm$0.1 & 23.6$\pm$0.4 & 21.6$\pm$0.1 & 44.8 & - \\ \bottomrule
        \end{tabular}
    }
    \label{tab:results_msrvtt}
\end{table*}

\begin{table*}[!tp]
    \centering
    \caption{\textbf{Experimental results on VGGSound-S}. This table shows the testing retrieval recall of models trained from scratch on the small coreset or synthetic set. ``V'': Video, ``T'': Text, and ``A'': Audio.  We report the mean and standard deviation of the results. In each case, the best performance is shown in bold.}
    \resizebox{\linewidth}{!}{
        \begin{tabular}{c|c|ccccccc|ccccccc|c}
        \toprule
        \multirow{2}{*}{\makecell{Triplets\\(Ratio)}} & \multirow{2}{*}{\makecell{Methods}} & \multicolumn{7}{c|}{R@1~$\uparrow$} & \multicolumn{7}{c|}{R@5~$\uparrow$} & \multirow{2}{*}{\makecell{Time\\(mins) $\downarrow$}}\\\cmidrule{3-16}
        & & V$\rightarrow$T & T$\rightarrow$V & V$\rightarrow$A & A$\rightarrow$V & T$\rightarrow$A & A$\rightarrow$T & Avg. & V$\rightarrow$T & T$\rightarrow$V & V$\rightarrow$A & A$\rightarrow$V & T$\rightarrow$A & A$\rightarrow$T & Avg. & \\\midrule
            
        \multirow{6}{*}{\makecell{100\\(1.25\%)}} 
         & Random   & 5.7$\pm$0.5 & 7.7$\pm$0.0 & 26.3$\pm$0.0 & 25.7$\pm$0.0 & 5.6$\pm$0.0 & 3.7$\pm$0.3 & 12.4 & 25.9$\pm$0.3 & 28.3$\pm$0.1 & \textbf{54.3$\pm$0.1} & 52.1$\pm$0.1 & 20.5$\pm$0.1 & 18.6$\pm$0.2 & 33.3 & - \\
         & MTT-VL   & 6.2$\pm$0.0 & 7.6$\pm$0.0 & 26.1$\pm$0.2 & 25.8$\pm$0.0 & 5.9$\pm$0.2 & 4.3$\pm$0.4 & 12.7 & 26.1$\pm$0.0 & 27.5$\pm$0.0 & 53.7$\pm$0.4 & 52.5$\pm$0.1 & 20.4$\pm$0.3 & 18.8$\pm$0.2 & 33.2 & 80.9 \\
         & TESLA    & 5.9$\pm$0.3 & 7.6$\pm$0.2 & 26.1$\pm$0.2 & 25.8$\pm$0.1 & 6.2$\pm$0.3 & 3.8$\pm$0.1 & 12.6 & 25.6$\pm$0.6 & 27.4$\pm$0.4 & 53.8$\pm$0.4 & 52.6$\pm$0.1 & 22.0$\pm$0.2 & 20.5$\pm$0.7 & 33.7 & 81.5 \\
         & LoRS     & 5.4$\pm$0.5 & 7.8$\pm$0.2 & \textbf{27.8$\pm$0.3} & \textbf{26.2$\pm$0.1} & 6.3$\pm$0.2 & 4.5$\pm$0.2 & 13.0 & 26.5$\pm$0.6 & 28.2$\pm$0.1 & 53.1$\pm$0.4 & 52.2$\pm$0.3 & 22.3$\pm$0.4 & 21.6$\pm$0.4 & 34.0 & 84.9 \\
         & RepBlend & 5.6$\pm$0.5 & 7.6$\pm$0.3 & 27.2$\pm$0.1 & 25.0$\pm$0.1 & 6.1$\pm$0.3 & 4.5$\pm$0.1 & 12.7 & 27.1$\pm$0.4 & 28.6$\pm$0.3 & 54.1$\pm$0.1 & 51.4$\pm$0.3 & 20.9$\pm$0.1 & 19.9$\pm$0.6 & 33.7 & 81.9 \\ \cmidrule{2-17}
         & \textbf{HoPA} & {\cellcolor{greenbg}\textbf{6.6$\pm$0.4}} & {\cellcolor{greenbg}\textbf{8.3$\pm$0.2}} & {\cellcolor{greenbg}27.3$\pm$0.5} & {\cellcolor{greenbg}26.1$\pm$0.1} & {\cellcolor{greenbg}\textbf{6.6$\pm$0.1}} & {\cellcolor{greenbg}\textbf{4.6$\pm$0.2}} & {\cellcolor{greenbg}\textbf{13.3}} & {\cellcolor{greenbg}\textbf{28.6$\pm$0.7}} & {\cellcolor{greenbg}\textbf{30.3$\pm$0.7}} & {\cellcolor{greenbg}54.0$\pm$0.2} & {\cellcolor{greenbg}\textbf{53.2$\pm$0.2}} & {\cellcolor{greenbg}\textbf{23.2$\pm$0.3}} & {\cellcolor{greenbg}\textbf{22.2$\pm$0.5}} & {\cellcolor{greenbg}\textbf{35.3}} & {\cellcolor{bluebg}\textbf{45.7}} \\ \midrule
        
        \multirow{6}{*}{\makecell{200\\(2.5\%)}} 
         & Random   & 6.0$\pm$0.4 & 7.7$\pm$0.0 & 26.6$\pm$0.3 & 25.6$\pm$0.3 & 5.8$\pm$0.1 & 4.0$\pm$0.5 & 12.6 & 25.5$\pm$0.8 & 27.9$\pm$0.5 & 54.4$\pm$0.2 & 52.0$\pm$0.4 & 20.6$\pm$0.2 & 19.0$\pm$0.8 & 33.2 & - \\
         & MTT-VL   & 5.8$\pm$0.4 & 8.1$\pm$0.2 & 26.8$\pm$0.4 & 25.4$\pm$0.6 & 6.6$\pm$0.3 & 4.8$\pm$0.3 & 12.9 & 26.8$\pm$0.1 & 29.2$\pm$0.1 & 52.7$\pm$0.6 & \textbf{53.3$\pm$0.3} & 23.8$\pm$0.5 & 22.8$\pm$0.2 & 34.8 & 86.3 \\
         & TESLA    & 6.0$\pm$0.4 & 7.6$\pm$0.1 & 25.8$\pm$0.3 & 25.7$\pm$0.1 & 5.7$\pm$0.1 & 4.2$\pm$0.5 & 12.5 & 26.1$\pm$1.2 & 27.4$\pm$0.9 & 52.5$\pm$0.1 & 52.7$\pm$0.2 & 20.3$\pm$0.1 & 19.0$\pm$0.7 & 33.0 & 85.8 \\
         & LoRS     & 6.4$\pm$0.1 & 8.2$\pm$0.2 & 26.7$\pm$0.2 & 25.6$\pm$0.2 & 6.9$\pm$0.1 & 5.1$\pm$0.4 & 13.2 & 27.5$\pm$0.2 & 28.9$\pm$0.1 & 53.4$\pm$0.2 & 53.2$\pm$0.8 & 24.6$\pm$0.1 & 22.7$\pm$0.3 & 35.1 & 85.5 \\ 
         & RepBlend & 6.3$\pm$0.5 & 8.1$\pm$0.3 & 26.9$\pm$0.3 & 25.7$\pm$0.2 & \textbf{7.1$\pm$0.3} & \textbf{5.2$\pm$0.6} & 13.2 & 28.1$\pm$0.5 & 29.4$\pm$0.5 & 54.1$\pm$0.4 & 53.2$\pm$0.1 & \textbf{24.8$\pm$0.7} & 22.9$\pm$0.4 & 35.4 & 87.2 \\ \cmidrule{2-17}
         & \textbf{HoPA} & {\cellcolor{greenbg}\textbf{6.7$\pm$0.3}} & {\cellcolor{greenbg}\textbf{8.5$\pm$0.2}} & {\cellcolor{greenbg}\textbf{27.0$\pm$0.1}} & {\cellcolor{greenbg}\textbf{26.1$\pm$0.3}} & {\cellcolor{greenbg}6.9$\pm$0.1} & {\cellcolor{greenbg}4.9$\pm$0.4} & {\cellcolor{greenbg}\textbf{13.4}} & {\cellcolor{greenbg}\textbf{28.6$\pm$0.6}} & {\cellcolor{greenbg}\textbf{30.9$\pm$0.6}} & {\cellcolor{greenbg}\textbf{54.8$\pm$0.4}} & {\cellcolor{greenbg}53.2$\pm$0.2} & {\cellcolor{greenbg}23.7$\pm$0.5} & {\cellcolor{greenbg}\textbf{23.1$\pm$0.2}} & {\cellcolor{greenbg}\textbf{35.7}} & {\cellcolor{bluebg}\textbf{50.4}} \\ \midrule
        
        \multirow{6}{*}{\makecell{500\\(6.25\%)}} 
         & Random   & 6.1$\pm$0.5 & 7.9$\pm$0.1 & 26.7$\pm$0.3 & 25.8$\pm$0.2 & 6.1$\pm$0.2 & 4.1$\pm$0.4 & 12.8 & 27.2$\pm$0.3 & 28.8$\pm$0.5 & 54.5$\pm$0.3 & 52.3$\pm$0.2 & 21.1$\pm$0.2 & 19.4$\pm$0.2 & 33.9 & - \\
         & MTT-VL   & 5.9$\pm$0.3 & 8.2$\pm$0.4 & 26.9$\pm$0.2 & 25.8$\pm$0.4 & 6.7$\pm$0.5 & 4.9$\pm$0.1 & 13.0 & 27.0$\pm$0.6 & 29.5$\pm$1.2 & 53.0$\pm$0.8 & 53.5$\pm$0.3 & 24.0$\pm$0.1 & 23.0$\pm$0.7 & 35.0 & 123.7 \\
         & TESLA    & 6.1$\pm$0.4 & 7.9$\pm$0.1 & 26.0$\pm$0.1 & \textbf{26.0$\pm$0.1} & 6.4$\pm$0.2 & 4.6$\pm$0.2 & 12.8 & 27.1$\pm$0.4 & 28.0$\pm$0.2 & 52.8$\pm$0.0 & 53.0$\pm$0.0 & 22.9$\pm$0.2 & 21.2$\pm$0.5 & 34.2 & 124.3 \\
         & LoRS     & 6.4$\pm$0.5 & 8.4$\pm$0.1 & 27.0$\pm$0.0 & 25.9$\pm$0.0 & \textbf{7.2$\pm$0.1} & 5.1$\pm$0.2 & 13.3 & 27.8$\pm$0.2 & 30.7$\pm$0.0 & 53.7$\pm$0.0 & 53.5$\pm$0.0 & 24.9$\pm$0.4 & 22.8$\pm$0.0 & 35.6 & 124.8 \\ 
         & RepBlend & 6.4$\pm$0.0 & 8.4$\pm$0.2 & 27.0$\pm$0.0 & 25.9$\pm$0.5 & 7.1$\pm$0.2 & \textbf{5.3$\pm$0.8} & 13.3 & 28.3$\pm$0.1 & 30.8$\pm$0.3 & 54.3$\pm$0.3 & \textbf{53.6$\pm$0.8} & \textbf{25.0$\pm$0.2} & 23.0$\pm$0.6 & 35.8 & 125.5 \\ \cmidrule{2-17}
         & \textbf{HoPA} & {\cellcolor{greenbg}\textbf{6.6$\pm$0.1}} & {\cellcolor{greenbg}\textbf{8.6$\pm$0.2}} & {\cellcolor{greenbg}\textbf{27.1$\pm$0.5}} & {\cellcolor{greenbg}\textbf{26.0$\pm$0.1}} & {\cellcolor{greenbg}6.9$\pm$0.0} & {\cellcolor{greenbg}\textbf{5.3$\pm$0.5}} & {\cellcolor{greenbg}\textbf{13.4}} & {\cellcolor{greenbg}\textbf{29.2$\pm$1.0}} & {\cellcolor{greenbg}\textbf{32.3$\pm$0.4}} & {\cellcolor{greenbg}\textbf{55.0$\pm$0.4}} & {\cellcolor{greenbg}\textbf{53.6$\pm$0.2}} & {\cellcolor{greenbg}24.8$\pm$0.2} & {\cellcolor{greenbg}\textbf{23.5$\pm$0.4}} & {\cellcolor{greenbg}\textbf{36.4}} & {\cellcolor{bluebg}\textbf{55.1}} \\ \midrule
        
        Full & - & 6.8$\pm$0.2 & 8.6$\pm$0.1 & 28.0$\pm$0.1 & 26.6$\pm$0.1 & 7.5$\pm$0.2 & 4.8$\pm$0.3 & 13.7 & 30.8$\pm$0.8 & 34.6$\pm$0.3 & 55.7$\pm$0.1 & 54.0$\pm$0.1 & 26.8$\pm$0.2 & 24.5$\pm$0.8 & 37.7 & - \\ \bottomrule
        \end{tabular}
    }
    \label{tab:results_vggsound}
\end{table*}

\begin{table*}[!tp]
    \centering
    \caption{\textbf{Experimental results on DiDeMo.} This table shows the testing retrieval recall of models trained from scratch on the small coreset or synthetic set. ``V'': Video, ``T'': Text, and ``A'': Audio.  We report the mean and standard deviation of the results. In each case, the best performance is shown in bold.}
    \resizebox{\linewidth}{!}{
        \begin{tabular}{c|c|ccccccc|ccccccc|c}
        \toprule
        \multirow{2}{*}{\makecell{Triplets\\(Ratio)}} & \multirow{2}{*}{\makecell{Methods}} & \multicolumn{7}{c|}{R@1~$\uparrow$} & \multicolumn{7}{c|}{R@5~$\uparrow$} & \multirow{2}{*}{\makecell{Time\\(mins) $\downarrow$}}\\\cmidrule{3-16}
        & & V$\rightarrow$T & T$\rightarrow$V & V$\rightarrow$A & A$\rightarrow$V & T$\rightarrow$A & A$\rightarrow$T & Avg. & V$\rightarrow$T & T$\rightarrow$V & V$\rightarrow$A & A$\rightarrow$V & T$\rightarrow$A & A$\rightarrow$T & Avg. & \\\midrule
            
        \multirow{6}{*}{\makecell{100\\(1.19\%)}} 
        & Random & 27.4$\pm$0.1 & 27.7$\pm$0.1 & 17.4$\pm$0.1 & 15.5$\pm$0.1 & 4.4$\pm$0.1 & 3.8$\pm$0.1 & 16.0 & 50.3$\pm$0.1 & 54.1$\pm$0.1 & 36.6$\pm$0.1 & 36.2$\pm$0.1 & 13.8$\pm$0.1 & 13.7$\pm$0.0 & 34.1 & - \\
        & MTT-VL & 26.6$\pm$0.1 & 26.0$\pm$0.4 & 15.8$\pm$0.6 & 18.8$\pm$0.5 & 4.4$\pm$0.3 & 4.4$\pm$0.4 & 16.0 & 51.5$\pm$0.4 & 50.7$\pm$1.5 & 37.8$\pm$0.3 & 41.6$\pm$0.5 & 13.4$\pm$0.8 & 14.0$\pm$1.0 & 34.8 & 77.3 \\
        & TESLA & 26.6$\pm$0.1 & 26.0$\pm$0.4 & 16.2$\pm$0.1 & \textbf{18.9$\pm$0.6} & 4.4$\pm$0.4 & 4.2$\pm$0.3 & 16.1 & 51.5$\pm$0.4 & 50.7$\pm$1.5 & 37.9$\pm$0.3 & 41.8$\pm$0.5 & 13.5$\pm$1.1 & 14.6$\pm$0.1 & 35.0 & 78.1 \\
        & LoRS & 28.0$\pm$0.5 & 28.4$\pm$0.5 & 17.9$\pm$0.2 & 18.0$\pm$0.8 & 4.3$\pm$0.1 & 4.7$\pm$0.4 & 16.9 & 52.7$\pm$0.3 & 54.1$\pm$0.7 & 40.2$\pm$1.2 & \textbf{42.2$\pm$0.6} & 13.3$\pm$0.5 & 13.4$\pm$1.0 & 36.0 & 79.4 \\
        & RepBlend & 27.7$\pm$0.5 & 28.5$\pm$0.7 & 18.0$\pm$0.2 & 18.2$\pm$0.9 & 4.3$\pm$0.1 & 4.7$\pm$0.5 & 16.9 & 52.9$\pm$0.0 & 54.3$\pm$0.8 & \textbf{40.8$\pm$0.7} & 41.9$\pm$0.1 & 13.4$\pm$0.6 & 13.5$\pm$1.3 & 36.1 & 78.2 \\ \cmidrule{2-17}
        & \textbf{HoPA} & {\cellcolor{greenbg}\textbf{31.5$\pm$0.2}} & {\cellcolor{greenbg}\textbf{31.2$\pm$1.1}} & {\cellcolor{greenbg}\textbf{18.9$\pm$0.6}} & {\cellcolor{greenbg}18.2$\pm$0.7} & {\cellcolor{greenbg}\textbf{5.3$\pm$0.5}} & {\cellcolor{greenbg}\textbf{5.0$\pm$0.4}} & {\cellcolor{greenbg}\textbf{18.4}} & {\cellcolor{greenbg}\textbf{56.6$\pm$0.7}} & {\cellcolor{greenbg}\textbf{56.0$\pm$0.9}} & {\cellcolor{greenbg}39.9$\pm$0.9} & {\cellcolor{greenbg}40.0$\pm$1.1} & {\cellcolor{greenbg}\textbf{15.1$\pm$0.5}} & {\cellcolor{greenbg}\textbf{15.6$\pm$0.7}} & {\cellcolor{greenbg}\textbf{37.2}} & {\cellcolor{bluebg}\textbf{41.9}} \\ \midrule
                  
        \multirow{6}{*}{\makecell{200\\(2.38\%)}} 
        & Random & 28.3$\pm$0.2 & 28.7$\pm$0.2 & 17.3$\pm$0.1 & 15.4$\pm$0.1 & 4.4$\pm$0.1 & 4.0$\pm$0.1 & 16.4 & 51.4$\pm$0.5 & 54.9$\pm$0.1 & 36.7$\pm$0.1 & 35.9$\pm$0.0 & 14.2$\pm$0.1 & 13.7$\pm$0.2 & 34.5 & - \\
        & MTT-VL & 27.9$\pm$0.5 & 27.5$\pm$0.8 & 15.5$\pm$0.3 & 16.8$\pm$0.4 & 4.6$\pm$0.2 & 4.2$\pm$0.3 & 16.1 & 53.2$\pm$0.6 & 53.0$\pm$1.0 & 36.5$\pm$0.2 & 38.4$\pm$0.4 & 14.1$\pm$0.5 & 13.8$\pm$0.3 & 34.8 & 95.0 \\
        & TESLA  & 27.8$\pm$0.6 & 27.3$\pm$0.3 & 15.9$\pm$0.2 & 16.9$\pm$0.5 & 4.6$\pm$0.3 & 4.1$\pm$0.2 & 16.1 & 53.5$\pm$0.4 & 53.1$\pm$0.8 & 36.6$\pm$0.3 & 38.6$\pm$0.6 & 14.0$\pm$0.4 & 14.1$\pm$0.1 & 35.0 & 97.4 \\
        & LoRS & 29.2$\pm$0.4 & 28.6$\pm$0.6 & 18.0$\pm$0.3 & 16.1$\pm$0.5 & 4.8$\pm$0.2 & 5.0$\pm$0.3 & 17.0 & 54.8$\pm$0.5 & 55.2$\pm$0.6 & 40.5$\pm$0.5 & \textbf{42.0$\pm$0.3} & 14.5$\pm$0.4 & 13.9$\pm$0.6 & 36.8 & 96.3 \\        
        & RepBlend & 29.0$\pm$0.5 & 28.5$\pm$0.2 & 18.0$\pm$0.3 & 16.0$\pm$0.2 & 4.7$\pm$0.1 & 5.0$\pm$0.1 & 16.9 & 55.0$\pm$0.2 & 55.1$\pm$0.5 & \textbf{41.2$\pm$0.4} & 41.8$\pm$0.2 & 14.6$\pm$0.2 & 14.0$\pm$0.4 & 37.0 & 97.4 \\ \cmidrule{2-17}
        & \textbf{HoPA} & {\cellcolor{greenbg}\textbf{31.8$\pm$0.9}} & {\cellcolor{greenbg}\textbf{31.7$\pm$0.2}} & {\cellcolor{greenbg}\textbf{19.6$\pm$0.8}} & {\cellcolor{greenbg}\textbf{18.1$\pm$0.5}} & {\cellcolor{greenbg}\textbf{5.4$\pm$0.2}} & {\cellcolor{greenbg}\textbf{5.1$\pm$0.2}} & {\cellcolor{greenbg}\textbf{18.6}} & {\cellcolor{greenbg}\textbf{57.2$\pm$0.6}} & {\cellcolor{greenbg}\textbf{56.9$\pm$0.6}} & {\cellcolor{greenbg}40.1$\pm$1.2} & {\cellcolor{greenbg}40.7$\pm$0.2} & {\cellcolor{greenbg}\textbf{15.4$\pm$0.7}} & {\cellcolor{greenbg}\textbf{15.3$\pm$0.1}} & {\cellcolor{greenbg}\textbf{37.6}} & {\cellcolor{bluebg}\textbf{45.7}} \\ \midrule
            
        \multirow{6}{*}{\makecell{500\\(5.96\%)}} 
        & Random & 29.9$\pm$0.1 & 29.0$\pm$0.3 & 17.4$\pm$0.1 & 15.6$\pm$0.1 & 4.6$\pm$0.1 & 3.8$\pm$0.1 & 16.7 & 54.9$\pm$0.2 & 56.5$\pm$0.5 & 37.0$\pm$0.2 & 35.6$\pm$0.4 & 14.3$\pm$0.1 & 14.1$\pm$0.1 & 35.4 & - \\
        & MTT-VL & 29.5$\pm$1.0 & 29.4$\pm$1.1 & 15.7$\pm$0.1 & 15.4$\pm$0.1 & 4.8$\pm$0.4 & 4.1$\pm$0.2 & 16.5 & 56.3$\pm$0.8 & 56.4$\pm$0.1 & 35.8$\pm$0.1 & 36.3$\pm$0.1 & 14.9$\pm$0.3 & 13.8$\pm$0.1 & 35.6 & 120.2 \\
        & TESLA & 29.1$\pm$1.1 & 28.7$\pm$0.2 & 15.7$\pm$0.1 & 15.4$\pm$0.1 & 4.8$\pm$0.5 & 4.1$\pm$0.2 & 16.3 & 55.9$\pm$0.3 & 56.5$\pm$0.0 & 35.8$\pm$0.1 & 36.3$\pm$0.1 & 14.7$\pm$0.0 & 13.9$\pm$0.0 & 35.5 & 118.9 \\
        & LoRS & 30.3$\pm$0.6 & 28.8$\pm$0.4 & 18.1$\pm$0.4 & 16.3$\pm$0.2 & \textbf{5.0$\pm$0.3} & \textbf{5.3$\pm$0.2} & 17.3 & 56.6$\pm$0.9 & 56.4$\pm$0.5 & 41.0$\pm$0.3 & \textbf{41.9$\pm$0.4} & \textbf{15.3$\pm$0.3} & 14.3$\pm$0.4 & 37.6 & 122.3 \\
        & RepBlend & 30.0$\pm$0.6 & 28.5$\pm$0.1 & 18.0$\pm$0.4 & 16.2$\pm$0.0 & 4.9$\pm$0.3 & 5.2$\pm$0.0 & 17.1 & 56.1$\pm$0.3 & 56.1$\pm$0.4 & \textbf{41.9$\pm$0.4} & 41.6$\pm$0.0 & 15.2$\pm$0.1 & 14.5$\pm$0.2 & 37.6 & 122.8 \\ \cmidrule{2-17}
        & \textbf{HoPA} & {\cellcolor{greenbg}\textbf{31.3$\pm$1.1}} & {\cellcolor{greenbg}\textbf{31.4$\pm$0.3}} & {\cellcolor{greenbg}\textbf{19.8$\pm$1.3}} & {\cellcolor{greenbg}\textbf{18.6$\pm$0.1}} & {\cellcolor{greenbg}4.8$\pm$0.1} & {\cellcolor{greenbg}4.9$\pm$0.0} & {\cellcolor{greenbg}\textbf{18.5}} & {\cellcolor{greenbg}\textbf{57.3$\pm$2.9}} & {\cellcolor{greenbg}\textbf{56.8$\pm$1.4}} & {\cellcolor{greenbg}41.2$\pm$0.4} & {\cellcolor{greenbg}41.4$\pm$1.2} & {\cellcolor{greenbg}15.1$\pm$0.4} & {\cellcolor{greenbg}\textbf{14.7$\pm$0.1}} & {\cellcolor{greenbg}\textbf{37.8}} & {\cellcolor{bluebg}\textbf{53.6}} \\ \midrule
            
        Full & - & 32.7$\pm$0.4 & 31.9$\pm$0.2 & 20.7$\pm$0.5 & 19.5$\pm$0.2 & 5.6$\pm$0.1 & 4.8$\pm$0.1 & 19.2 & 58.6$\pm$0.3 & 58.7$\pm$0.3 & 43.2$\pm$0.2 & 44.5$\pm$0.3 & 16.4$\pm$0.2 & 16.9$\pm$0.3 & 39.7 & - \\ \bottomrule
        \end{tabular}
    }
    \label{tab:results_didemo}
\end{table*}

\subsection{Main Results (RQ1)}
\textbf{Performance comparison.} To assess the quality of the distilled data, we evaluate the cross-modal retrieval performance of models trained solely on the distilled sets. Results on MSR-VTT, VGGSound-S, and DiDeMo are reported in Table~\ref{tab:results_msrvtt}, Table~\ref{tab:results_vggsound}, and Table~\ref{tab:results_didemo}, respectively. Across all datasets and compression ratios, HoPA consistently outperforms pairwise dataset distillation baselines. Specifically, we observe absolute improvements of 0.2\%–1.5\% in Avg. R@1 and 0.5\%–1.7\% in Avg. R@5, demonstrating the effectiveness of jointly distilling multimodal data rather than treating modality pairs independently. Particularly, on DiDeMo, HoPA achieves an average R@1 of 18.0\% and R@5 of 37.7\% using only 100 distilled examples, surpassing LoRS (17.3\% and 37.6\%) and RepBlend (17.1\% and 37.6\%) trained with 500 examples. This result highlights the strong condensing capability of HoPA, showing that competitive and even superior retrieval performance can be achieved with substantially fewer distilled instances. Furthermore, across all three datasets, HoPA preserves over 95\% of the full-dataset performance on both R@1 and R@5 while using only 500 distilled examples (approximately 5\%-6\% of the original training data). These results indicate that our distilled data effectively captures both modality-specific information and cross-modal alignment cues, even under aggressive compression.

\textbf{Time consumption.} The total distillation time, measured on a single NVIDIA TESLA A100 GPU, is reported in the rightmost column of the tables. Although HoPA involves SVD-based operations, which are a bit computationally intensive, it nevertheless achieves an approximately $2\times$ speedup compared to pairwise distillation baselines. This efficiency gain stems from HoPA distilling all three modalities \textit{simultaneously}, thereby avoiding the redundant optimization and repeated computations inherent in pairwise distillation strategies. As a result, despite the per-iteration complexity introduced by SVD, the overall distillation process remains substantially more time-efficient while producing higher-quality distilled data.

\subsection{Cross-Architecture Generalization (RQ2)}

\begin{wraptable}{r}{0.45\textwidth}
    \vspace{-4mm}
    \centering
    \caption{
    \textbf{Cross architecture generalization results} under `Zero-shot' and `Distilled' settings. }
    \label{tab:cross_arch_generalization}
    \resizebox{\linewidth}{!}{
    \begin{tabular}{llccc}
    \toprule
    Model & Setting & R@1 & R@5 & R@10 \\
    \midrule
    \multirow{2}{*}{ImageBind}
    & Zero-shot & 12.25 & 32.87 & 44.32 \\
    & Distilled & \textbf{13.15} & \textbf{35.33} & \textbf{47.34} \\
    \midrule
    \multirow{2}{*}{LanguageBind}
    & Zero-shot & 4.91 & 19.94 & 29.56 \\
    & Distilled & \textbf{5.93} & \textbf{22.11} & \textbf{33.32} \\
    \midrule
    \multirow{2}{*}{OmniBind}
    & Zero-shot & 12.92 & 34.88 & 48.03 \\
    & Distilled & \textbf{13.13} & \textbf{35.63} & \textbf{48.87} \\
    \bottomrule
    \end{tabular}
    }
    \vspace{-4mm}
\end{wraptable}
Following previous works~\citep{zhao2021dataset}, we conduct a cross-architecture evaluation to examine the transferability of our distilled data. We first distill a 200-instance synthetic dataset on ImageBind using VGGSound-S. We then fine-tune different pretrained multimodal models with the distilled data and evaluate retrieval performance on the VGGSound-S test set. For fair cross-architecture evaluation, we follow the same parameter-efficient protocol as ImageBind, where the pretrained multimodal encoders are frozen and modality-specific projection heads are optimized.  As reported in Table~\ref{tab:cross_arch_generalization}, the distilled data improves performance not only on ImageBind, the source architecture used for distillation, but also on LanguageBind~\cite{zhu2024languagebind} and OmniBind~\cite{wang2024omnibind}, which were unseen during the distillation process. This suggests that our synthetic data captures transferable supervision that generalizes across different multimodal architectures.

\subsection{Ablation Study (RQ3)}

\textbf{Pairwise distillation \& Omniwise distillation.} 

\begin{table}[t]
\centering
\caption{\textbf{Empirical comparisons Pairwise distillation and omniwise distillation.}
We compare our omniwise distillation strategy with three pair-wise variants on MSR-VTT, VGGSound-S, and DiDeMo. In each case, the best performance is shown in bold.}
\label{tab:pair_vs_omni_main}
\resizebox{0.75\linewidth}{!}{
\begin{tabular}{lc|c|ccc|c|ccc}
\toprule
\multirow{2}{*}{\textbf{Datasets}} & \multirow{2}{*}{\textbf{Queries}} 
& \multicolumn{4}{c|}{\textbf{Avg. R@1}} 
& \multicolumn{4}{c}{\textbf{Avg. R@5}} \\
\cmidrule(lr){3-6} \cmidrule(lr){7-10}
& & \textbf{HoPA} & 3-pair & T-bind & V-bind & \textbf{HoPA} & 3-pair & T-bind & V-bind \\
\midrule
\multirow{3}{*}{MSR-VTT}
& 100 & \textbf{21.78} & 20.21 & 20.42 & 21.05 & \textbf{42.43} & 42.08 & 41.45 & 41.97 \\
& 200 & \textbf{21.93} & 19.94 & 20.49 & 19.24 & \textbf{42.73} & 42.40 & 41.35 & 42.28 \\
& 500 & \textbf{22.25} & 18.11 & 20.19 & 19.85 & \textbf{42.96} & 41.60 & 38.65 & 41.86 \\
\midrule
\multirow{3}{*}{VGGSound-S}
& 100 & \textbf{13.19} & 12.78 & 12.10 & 12.92 & 34.89 & \textbf{35.20} & 34.01 & 35.00 \\
& 200 & \textbf{13.15} & 12.31 & 11.30 & 12.74 & \textbf{35.51} & 34.90 & 33.76 & \textbf{35.51} \\
& 500 & \textbf{13.18} & 12.50 & 10.74 & 12.43 & \textbf{35.85} & 35.22 & 32.93 & 34.78 \\
\midrule
\multirow{3}{*}{DiDeMo}
& 100 & \textbf{18.35} & 16.30 & 16.04 & 16.98 & \textbf{37.21} & 36.02 & 35.08 & 36.83 \\
& 200 & \textbf{18.61} & 16.09 & 15.81 & 15.73 & \textbf{37.64} & 36.22 & 33.83 & 35.24 \\
& 500 & \textbf{18.46} & 14.92 & 15.81 & 15.81 & \textbf{37.76} & 34.36 & 33.83 & 33.83 \\
\bottomrule
\end{tabular}
}
\end{table}
To verify the benefit of omniwise distillation, we compare HoPA with three pairwise variants: \textit{3-pair}, which optimizes all three pairwise alignments with $\mathcal{L}_{\text{3-pair}}=\mathcal{L}_{v,t}+\mathcal{L}_{v,a}+\mathcal{L}_{a,t}$; \textit{T-bind}, which only aligns video and audio to text with $\mathcal{L}_{\text{T-bind}}=\mathcal{L}_{v,t}+\mathcal{L}_{a,t}$; and \textit{V-bind}, which aligns text and audio to video with $\mathcal{L}_{\text{V-bind}}=\mathcal{L}_{v,t}+\mathcal{L}_{v,a}$, where $\mathcal{L}_{\alpha,\beta}$ denotes the wBCE loss between modalities $\alpha$ and $\beta$ following~\citep{xu2024low}. Unlike these pairwise objectives, HoPA directly distills the joint omnimodal relationship among all modalities.
As shown in Table~\ref{tab:pair_vs_omni_main}, HoPA consistently achieves the best Avg. R@1 across all three datasets and query budgets, and remains competitive on Avg. R@5. Moreover, pair-wise objectives generally become less stable as the number of synthetic queries increases, especially on MSR-VTT and DiDeMo, where their performance often stagnates or degrades under larger query budgets. In contrast, HoPA remains stable and often improves further, suggesting that preserving the joint omnimodal structure is more effective and robust than decomposing it into separate pairwise alignments.

\begin{figure*}[t]
    \centering
    \includegraphics[width=0.95\linewidth]{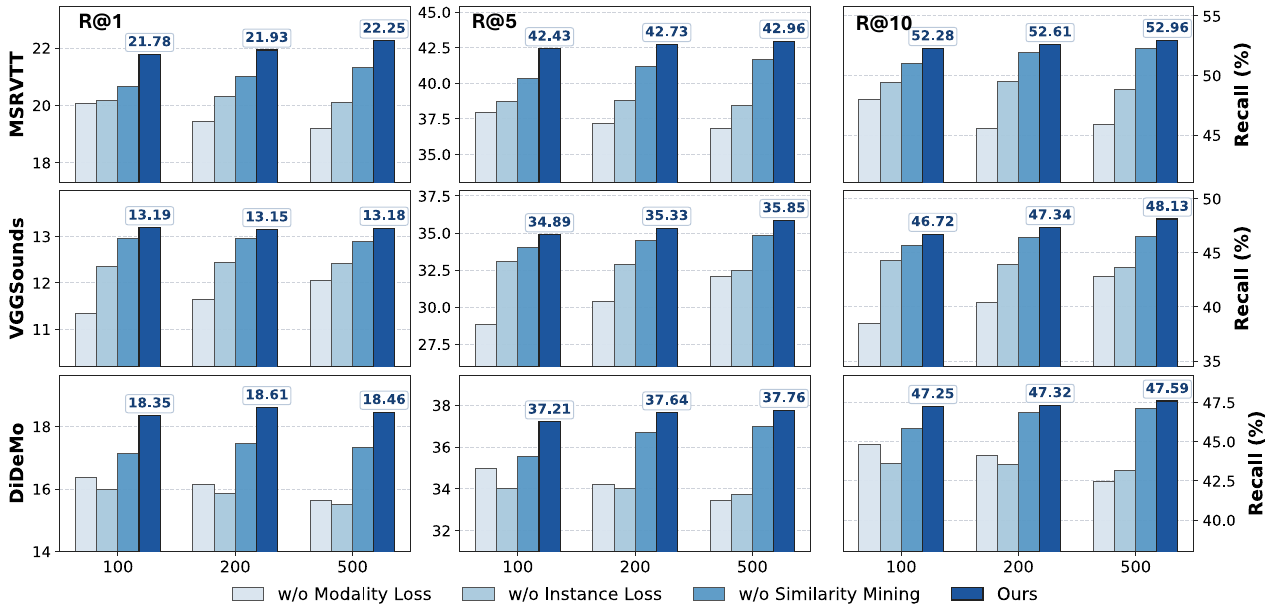}
    \caption{\textbf{Ablation study} of each learning objective across all datasets and different numbers of queries.}
    \label{fig:ablation_objective}
\end{figure*}

\textbf{Effect of each learning objective.} To study the contribution of each component, we conduct an ablation study on our learning objective. As shown in Figure~\ref{fig:ablation_objective}, removing either the modality-level loss $\gL_\mathrm{M}$ or instance-level loss $\gL_\text{wBCE}$ consistently degrades performance, indicating that both objectives are essential. Moreover, similarity mining further improves the results on top of these objectives, and the full model achieves the best overall performance.

\subsection{Further Analysis (RQ4)}

\begin{table}[t]
\centering
\caption{\textbf{Effect of the proxy rank on distillation performance.} We compare \textit{Rank-1} (Ours) against  \textit{Rank-2} across different metrics. In each case, the best result is shown in bold. As can be seen,  \textit{Rank-1} consistently outperforms \textit{Rank-2}.} 
\label{tab:rank_analysis}
\resizebox{0.75\linewidth}{!}{
\begin{tabular}{lc|cc|cc|cc}
\toprule
\multirow{2}{*}{\textbf{Dataset}} & \multirow{2}{*}{\textbf{Triplets}} & \multicolumn{2}{c|}{\textbf{R@1}} & \multicolumn{2}{c|}{\textbf{R@5}} & \multicolumn{2}{c}{\textbf{R@10}} \\
\cmidrule(lr){3-4} \cmidrule(lr){5-6} \cmidrule(lr){7-8}
 & & \textit{Rank-1} & \textit{Rank-2} & \textit{Rank-1} & \textit{Rank-2} & \textit{Rank-1} & \textit{Rank-2} \\
\midrule
\multirow{3}{*}{MSR-VTT} 
 & 100 & \textbf{21.78} & 20.80 & \textbf{42.43} & 39.99 & \textbf{52.28} & 49.77 \\
 & 200 & \textbf{21.93} & 21.09 & \textbf{42.73} & 39.81 & \textbf{52.61} & 49.37 \\
 & 500 & \textbf{22.25} & 20.48 & \textbf{42.96} & 39.78 & \textbf{52.96} & 49.57 \\
\midrule
\multirow{3}{*}{VGGSound-S} 
 & 100 & \textbf{13.19} & 12.18 & \textbf{34.89} & 32.85 & \textbf{46.72} & 44.13 \\
 & 200 & \textbf{13.15} & 12.25 & \textbf{35.33} & 32.58 & \textbf{47.34} & 43.72 \\
 & 500 & \textbf{13.18} & 12.44 & \textbf{35.85} & 32.54 & \textbf{48.13} & 43.34 \\
\midrule
\multirow{3}{*}{DiDeMo} 
 & 100 & \textbf{18.35} & 16.67 & \textbf{37.21} & 35.13 & \textbf{47.25} & 44.54 \\
 & 200 & \textbf{18.61} & 17.01 & \textbf{37.64} & 35.52 & \textbf{47.32} & 45.21 \\
 & 500 & \textbf{18.46} & 16.80 & \textbf{37.76} & 35.25 & \textbf{47.59} & 44.36 \\
\bottomrule
\end{tabular}
}
\end{table}

\textbf{Preserving more ranks.} A core premise of HoPA is that the perfect omnimodal alignment corresponds to a rank-1 Gram matrix (\textit{a.k.a.}, \textit{Rank-1}). To verify this empirically, we investigate whether preserving more information from SVD decomposition improves distillation performance. Specifically, we modify the proxy definition to include the top-2 singular vectors instead of just the leading one (\textit{a.k.a.}, \textit{Rank-2}). The results are reported in Table~\ref{tab:rank_analysis}.
We observe a consistent performance drop when increasing the proxy rank from 1 to 2 across all datasets. This phenomenon indicates that the leading singular direction ($\mathbf{v}_1$) captures the consensus across modalities, while the subsequent directions capture orthogonal components, which typically correspond to modality-specific details or noises.

\begin{wraptable}{r}{0.5\textwidth}
\vspace{-4mm}
\centering
\caption{\textbf{Computational efficiency analysis} by comparing the runtime with and without the SVD on MSR-VTT. The distillation time is measured per 100 iterations. }
\label{tab:efficiency_svd}
\resizebox{\linewidth}{!}{
\begin{tabular}{l|cc|c}
\toprule
\textbf{Stage} & \textbf{w/o SVD} & \textbf{with SVD} & \textbf{SVD Cost} \\
\midrule
Buffer Training (min/traj) & 10.85 & 11.25 & +3.5\% \\
Distillation (sec/100 iters) & 53.28 & 87.48 & +39.1\% \\
\bottomrule
\end{tabular}
}
\vspace{-3mm}
\end{wraptable}
\textbf{Efficiency analysis.}
A potential concern with our proposed method is the computational cost introduced by the SVD operation. To provide a quantitative assessment, we benchmark the training and distillation time of HoPA against a baseline where SVD is replaced by computationally negligible tensor manipulations (\textit{i.e.}, identity mapping and mean pooling) while maintaining the same gradient graph structure. The experiments are conducted on the MSR-VTT dataset with the same hyperparameters used in our main results.
As shown in Table~\ref{tab:efficiency_svd}, the inclusion of SVD introduces a negligible latency (+3.5\%) during the buffer training phase. In the distillation phase, the overhead increases to 39.1\%. This is attributed to the TESLA~\citep{cui2023scaling} algorithm we use in practice to reduce the memory consumption, which back-propagates through the SVD operation involving high-order derivative calculations.

\textbf{Case visualization.} 
We provide qualitative visualization for omnimodal dataset distillation on ImageBind using the VGGSound-S dataset. Figure~\ref{fig:visualization_main} shows both a representative comparison between an original instance and its distilled counterpart on the left, and additional distilled cases with richer visualization on the right. 
For clarity, the comparison view uses one representative video frame and one audio map, while each case visualization contains three video frames and three audio maps.
Consistent with prior observations in dataset distillation, the synthetic visual pattern shows a DeepDream-style appearance~\citep{zeiler2014visualizing}, with enhanced high-frequency details and stylized textures. 
For the text modality, we follow the same retrieval protocol as prior work~\citep{xu2024low,zhang2025beyond}, \textit{i.e.}, we display the closest sentence retrieved from the training set according to the distilled text embedding. 
The example indicates that our distilled omnimodal sample preserves the core event semantics and cross-modal correspondence across video, audio, and language in a compact synthetic form. 
More examples are presented in Appendix~\ref{appendix:visualization}.

\begin{figure}
    \centering
    \includegraphics[width=\linewidth]{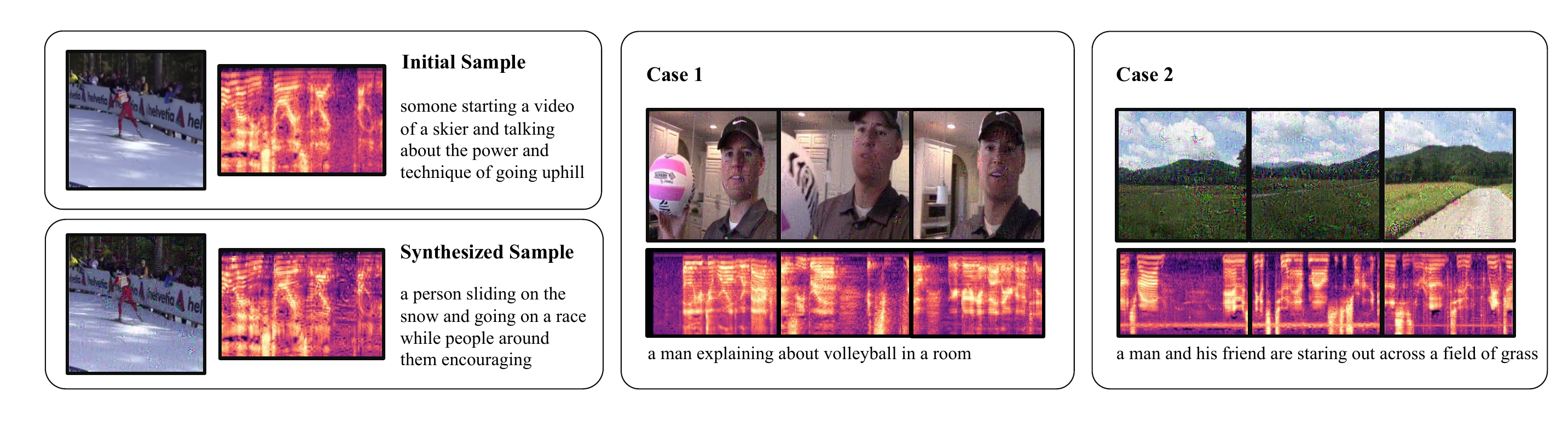}
    \vspace{-6mm}
    \caption{\textbf{The visualization of distilled omnimodal data on ImageBind with the VGGSound-S dataset.} Comparison between samples before and after distillation, shown with one representative video frame, one audio map, and the corresponding text (\textit{left}). Illustrative cases, each visualized with three video frames, three audio maps, and the corresponding text description (\textit{right}).}
    \label{fig:visualization_main}
\end{figure}

%% file: sec/5_conclusion.tex
\section{Conclusion}
\label{sec:conclusion}
In this work, we investigated dataset distillation in omnimodal settings beyond two modalities, which is a problem that has received little prior attention. We proposed a unified omnimodal dataset distillation method, termed HoPA, that leverages a compact proxy to model high-order alignments, enabling efficient and scalable distillation without explicit pairwise modality enumeration. 
Theoretically, we identify an overlooked determinant that bounds the endpoint discrepancy arising from the alignment objective, which motivates our design. The spectral analysis further demonstrates that HoPA induces a tighter bound compared to the pairwise dataset distillation methods, effectively minimizing this discrepancy.
Extensive experimental results demonstrate that our HoPA consistently achieves strong compression efficiency while preserving downstream performance across diverse datasets. These findings highlight the feasibility and promise of omnimodal dataset distillation at scale, and open new directions for data-efficient learning in complex omnimodal systems.

%% file: sec/appendix.tex
\section{Theoretical Analysis}
\label{appendix:theoretial_ana}

\subsection{Proof of Lemma~\ref{lem:local_to_segment_main}}
\label{app:proof_lemma1}

\begin{proof}
We track the parameter gap $\delta_r := \boldsymbol{\theta}_r^S - 
\boldsymbol{\theta}_r^T$ over rollout steps $r = 0, \ldots, n-1$.

\textbf{Step 1: One-step recursion.}
By the gradient descent updates,
\begin{align}
\delta_{r+1}
= \delta_r
- \eta\bigl(\mathbf{g}_S(\boldsymbol{\theta}_r^S)
- \mathbf{g}_T(\boldsymbol{\theta}_r^T)\bigr).
\end{align}
We decompose the gradient difference as
\begin{align}
\mathbf{g}_S(\boldsymbol{\theta}_r^S) - \mathbf{g}_T(\boldsymbol{\theta}_r^T)
=
\underbrace{
\bigl(\mathbf{g}_S(\boldsymbol{\theta}_r^S)
- \mathbf{g}_T(\boldsymbol{\theta}_r^S)\bigr)
}_{\text{data mismatch}}
+
\underbrace{
\bigl(\mathbf{g}_T(\boldsymbol{\theta}_r^S)
- \mathbf{g}_T(\boldsymbol{\theta}_r^T)\bigr)
}_{\text{Lipschitz term}}.
\end{align}

\textbf{Step 2: Apply the Lipschitz condition.}
By the $L$-Lipschitz assumption on $\mathbf{g}_T$,
\begin{align}
\|\mathbf{g}_T(\boldsymbol{\theta}_r^S)
- \mathbf{g}_T(\boldsymbol{\theta}_r^T)\|
\le L\|\boldsymbol{\theta}_r^S - \boldsymbol{\theta}_r^T\|
= L\|\delta_r\|.
\end{align}

\textbf{Step 3: One-step norm bound.}
Taking norms and applying the triangle inequality,
\begin{align}
\|\delta_{r+1}\|
\le \|\delta_r\|
+ \eta\|\mathbf{g}_S(\boldsymbol{\theta}_r^S)
- \mathbf{g}_T(\boldsymbol{\theta}_r^S)\|
+ \eta L\|\delta_r\|
= (1 + \eta L)\|\delta_r\|
+ \eta\Delta_r,
\end{align}
where
\begin{align}
\Delta_r
:= \|\mathbf{g}_S(\boldsymbol{\theta}_r^S)
- \mathbf{g}_T(\boldsymbol{\theta}_r^S)\|
= \left(\sum_{m \in \mathcal{M}}
\|\mathbf{g}_{S,m}(\boldsymbol{\theta}_r^S)
- \mathbf{g}_{T,m}(\boldsymbol{\theta}_r^S)\|_2^2
\right)^{\frac{1}{2}}.
\end{align}

\textbf{Step 4: Unroll the recursion.}
Since $\delta_0 = \boldsymbol{\theta}_0^S - \boldsymbol{\theta}_0^T = 
\mathbf{0}$ by shared initialization, unrolling the recursion gives
\begin{align}
\|\delta_n\|
\le \sum_{r=0}^{n-1}
\eta(1+\eta L)^{n-1-r}\Delta_r
= \eta\sum_{r=0}^{n-1}(1+\eta L)^{n-1-r}
\left(\sum_{m\in\mathcal{M}}
\|\mathbf{g}_{S,m}(\boldsymbol{\theta}_r^S)
-\mathbf{g}_{T,m}(\boldsymbol{\theta}_r^S)\|_2^2
\right)^{\frac{1}{2}},
\end{align}
which is the stated bound.
\end{proof}

\subsection{Spectral Forms of Pairwise InfoNCE and Pairwise Weighted BCE}
\label{app:spectralize_pairwise_losses}
We show that pairwise InfoNCE and pairwise weighted BCE are full-spectrum: for each instance $i$ and each active
mode $j$ ($\sigma_j^{(i)}>0$), the projection $\beta_j^{(i)}$ is nonzero for generic representations, \textit{i.e.}, it vanishes only under special alignment conditions that do not hold in general.
Recall
\begin{align}
s_{ab}^{(i,l)}=\langle \mathbf{z}_a^{(i)},\mathbf{z}_b^{(l)}\rangle
=\mathbf{e}_a^\top \mathbf{z}^{(i)}(\mathbf{z}^{(l)})^\top \mathbf{e}_b.
\end{align}

Using $\mathbf{z}^{(i)}\mathbf{v}_j^{(i)}=\sigma_j^{(i)}\mathbf{u}_j^{(i)}$, we have
\begin{equation}
\langle \mathbf{v}_j^{(i)},\mathbf{z}_a^{(i)}\rangle
=\sigma_j^{(i)}[\mathbf{u}_j^{(i)}]_a,
\qquad
\langle \mathbf{v}_j^{(i)},\mathbf{z}_b^{(i)}\rangle
=\sigma_j^{(i)}[\mathbf{u}_j^{(i)}]_b.
\label{eq:svd_identity}
\end{equation}

Across the batch, $\mathbf{z}^{(i)}$ appears in two roles: anchor in $s_{ab}^{(i,l)}$, candidate in $s_{ab}^{(m,i)}$. Hence
\begin{equation}
\frac{\partial s_{ab}^{(i,l)}}{\partial \mathbf{z}^{(i)}} = \mathbf{e}_a(\mathbf{z}_b^{(l)})^\top,
\qquad
\frac{\partial s_{ab}^{(m,i)}}{\partial \mathbf{z}^{(i)}} = \mathbf{e}_b(\mathbf{z}_a^{(m)})^\top,
\label{eq:grad_roles}
\end{equation}
both holding for all $m,l$ including $m=l=i$.

For either pairwise loss, define
$g_{ml}:=\partial \gL/\partial s_{ab}^{(m,l)}$, and assume $(i,i)$ is always included in the pair set so that $g_{ii}$ is well-defined.
By the chain rule and Eq.~(\ref{eq:grad_roles}),
\begin{equation}
\nabla_{\mathbf{z}^{(i)}}\gL
= \sum_{l} g_{il}\,\mathbf{e}_a(\mathbf{z}_b^{(l)})^\top
+\sum_m g_{mi}\,\mathbf{e}_b(\mathbf{z}_a^{(m)})^\top.
\label{eq:grad_generic}
\end{equation}
Projecting onto mode $j$ via Eq.~(\ref{eq:beta_proj}):
\begin{align}
\beta_j^{(i)}
&=\mathbf{u}_j^{(i)\top}\!\bigl(\nabla_{\mathbf{z}^{(i)}}\gL\bigr)\mathbf{v}_j^{(i)}
\nonumber\\
&=[\mathbf{u}_j^{(i)}]_a\sum_{l} g_{il}\,\langle \mathbf{v}_j^{(i)},\mathbf{z}_b^{(l)}\rangle
+[\mathbf{u}_j^{(i)}]_b\sum_m g_{mi}\,\langle \mathbf{v}_j^{(i)},\mathbf{z}_a^{(m)}\rangle
\nonumber\\
&=2g_{ii}\sigma_j^{(i)}[\mathbf{u}_j^{(i)}]_a[\mathbf{u}_j^{(i)}]_b
+[\mathbf{u}_j^{(i)}]_a\sum_{l\neq i} g_{il}\,\langle \mathbf{v}_j^{(i)},\mathbf{z}_b^{(l)}\rangle
+[\mathbf{u}_j^{(i)}]_b\sum_{m\neq i} g_{mi}\,\langle \mathbf{v}_j^{(i)},\mathbf{z}_a^{(m)}\rangle,
\label{eq:beta_generic}
\end{align}
where the last step separates out the $l=i$ and $m=i$ terms and applies Eq.~(\ref{eq:svd_identity}) to each: the $l=i$ anchor term contributes $g_{ii}\sigma_j^{(i)}[\mathbf{u}_j^{(i)}]_a[\mathbf{u}_j^{(i)}]_b$ and the $m=i$ candidate term contributes the same, summing to the factor of~$2$.

\textbf{Pairwise InfoNCE.}
\begin{align}
\gL_{\mathrm{InfoNCE}}
=-\frac{1}{N}\sum_{m=1}^N
\log\frac{\exp(s_{ab}^{(m,m)}/\tau)}
{\sum_{l=1}^N\exp(s_{ab}^{(m,l)}/\tau)},
\qquad
g_{ml}=\frac{p_{ml}-\mathbb{I}[m=l]}{N\tau},
\end{align}
where $p_{ml}=\exp(s_{ab}^{(m,l)}/\tau)\big/\sum_t \exp(s_{ab}^{(m,t)}/\tau)$.
Note that all pairs $(m,l)$ are present in the softmax, so $g_{ii}$ is always well-defined.
Substituting into Eq.~(\ref{eq:beta_generic}):
\begin{align}
\beta_j^{(i)}
=\frac{1}{N\tau}\Bigl[
&2(p_{ii}-1)\sigma_j^{(i)}[\mathbf{u}_j^{(i)}]_a[\mathbf{u}_j^{(i)}]_b
\nonumber\\
&+[\mathbf{u}_j^{(i)}]_a\sum_{l\neq i} p_{il}\,\langle\mathbf{v}_j^{(i)},\mathbf{z}_b^{(l)}\rangle
+[\mathbf{u}_j^{(i)}]_b\sum_{m\neq i} p_{mi}\,\langle\mathbf{v}_j^{(i)},\mathbf{z}_a^{(m)}\rangle
\Bigr].
\label{eq:beta_infonce_final}
\end{align}
For finite $\tau$ and $N\ge2$, $p_{ii}<1$, so the diagonal coefficient $2(p_{ii}-1)/(N\tau)$ is strictly negative.
The diagonal contribution is therefore nonzero whenever $[\mathbf{u}_j^{(i)}]_a[\mathbf{u}_j^{(i)}]_b\neq0$, which holds for generic representations.

\textbf{Pairwise wBCE.}
\begin{align}
\gL_{\mathrm{wBCE}}
=\frac{1}{|\mathcal P|}\sum_{(m,l)\in\mathcal P}
w_{ml}\,\ell_{\mathrm{BCE}}\!\left(y_{ml},\,\mathrm{sigmoid}(s_{ab}^{(m,l)}/\tau')\right),
\qquad
g_{ml}=\frac{w_{ml}r_{ml}}{|\mathcal{P}|\tau'},
\end{align}
with $r_{ml}:=\sigma(s_{ab}^{(m,l)}/\tau')-y_{ml}$, $w_{ml}>0$, and $\ell_{\mathrm{BCE}}(y,\hat y):=-y\log\hat y-(1-y)\log(1-\hat y)$ denoting
the binary cross-entropy.
Substituting into Eq.~(\ref{eq:beta_generic}):
\begin{align}
\beta_j^{(i)}
=\frac{1}{|\mathcal{P}|\tau'}\Bigl[
&2w_{ii}r_{ii}\sigma_j^{(i)}[\mathbf{u}_j^{(i)}]_a[\mathbf{u}_j^{(i)}]_b
\nonumber\\
&+[\mathbf{u}_j^{(i)}]_a\sum_{l\neq i} w_{il}r_{il}\,\langle\mathbf{v}_j^{(i)},\mathbf{z}_b^{(l)}\rangle
+[\mathbf{u}_j^{(i)}]_b\sum_{m\neq i} w_{mi}r_{mi}\,\langle\mathbf{v}_j^{(i)},\mathbf{z}_a^{(m)}\rangle
\Bigr].
\label{eq:beta_wbce}
\end{align}
By assuming $(i,i)\in\mathcal{P}$, $y_{ii}=1$, $\tau'>0$, and $\ell_2$-normalized features ($s_{ab}^{(i,i)}\in[-1,1]$), we have $r_{ii}=\sigma(s_{ab}^{(i,i)}/\tau')-1<0$.
Hence, the diagonal coefficient $2w_{ii}r_{ii}\sigma_j^{(i)}/(|\mathcal{P}|\tau')$ is strictly negative (since $w_{ii}>0$ and $\sigma_j^{(i)}>0$), and nonzero whenever $[\mathbf{u}_j^{(i)}]_a[\mathbf{u}_j^{(i)}]_b\neq0$.

Therefore, both pairwise InfoNCE and pairwise weighted BCE are full-spectrum in general. They activate all active spectral modes, accumulating gradient mismatch from every direction, in contrast to our proposed single-mode objective.

\subsection{Derivation of the Spectral Mismatch Model}
\label{app:spectral_mismatch_derivation}
We derive the per-step gradient mismatch bound and
Theorem~\ref{thm:eigen_selectivity} from first principles.
Throughout, $\mathbf{z} \in \mathbb{R}^{k \times d}$ is the stacked representation matrix for a single instance with thin SVD $\mathbf{z} = \sum_{j=1}^{K} \sigma_j \mathbf{u}_j \mathbf{v}_j^{\top}$,
and $\mathbf{J} := \partial\,\mathrm{vec}(\mathbf{z}) / \partial\boldsymbol{\theta} \in \mathbb{R}^{kd \times P}$ is the encoder Jacobian, where $P$ is the number of model parameters.

\begin{assumption}
\label{assum:spectral}
(i) \textit{(Differentiability)}
    The inner objective $\gL = f(\sigma_1, \ldots, \sigma_K)$ is differentiable with respect to each $\sigma_j$, at points of distinct singular values.
(ii) \textit{(Bounded Jacobian)}
    $\|\mathbf{J}(\boldsymbol{\theta})\|_{\mathrm{op}} \leq C$ uniformly over all $\boldsymbol{\theta}$ considered.
\end{assumption}
\begin{proof}
\textbf{Step 1: Representation-level gradient.}
By the standard matrix singular-value derivative~\citep{ionescu2015matrix}, at points of distinct singular values, $\partial \sigma_j / \partial \mathbf{z} = \mathbf{u}_j \mathbf{v}_j^{\top}$.
By the chain rule and Assumption~\ref{assum:spectral}(i),
\begin{equation}
\label{eq:rep_grad}
    \nabla_{\mathbf{z}} f
    = \sum_{j=1}^{K} \alpha_{j,r}\, \mathbf{u}_j \mathbf{v}_j^{\top},
    \qquad \alpha_{j,r} := \frac{\partial f}{\partial \sigma_j}\bigg|_r.
\end{equation}

\textbf{Step 2: Parameter-level gradient mismatch.}
By the chain rule through the encoder,
\begin{align}
    \mathbf{g}_{T}(\boldsymbol{\theta}_r^{S})
    = \mathbf{J}(\boldsymbol{\theta}_r^{S})^{\top}
      \,\mathrm{vec}\!\left(\nabla_{\mathbf{z}} f\big|_{T}\right),
    \qquad
    \mathbf{g}_{S}(\boldsymbol{\theta}_r^{S})
    = \mathbf{J}(\boldsymbol{\theta}_r^{S})^{\top}
      \,\mathrm{vec}\!\left(\nabla_{\mathbf{z}} f\big|_{S}\right).
\end{align}
Since both gradients are evaluated at the \textit{same} parameter point $\boldsymbol{\theta}_r^{S}$, they share an identical Jacobian $\mathbf{J}(\boldsymbol{\theta}_r^{S})$;
The sole difference is the data fed through the encoder, yielding representations $\mathbf{z}^T$ and $\mathbf{z}^S$ with corresponding singular vectors $\{\mathbf{u}_j^T, \mathbf{v}_j^T\}$ and $\{\mathbf{u}_j^S, \mathbf{v}_j^S\}$.
We evaluate $\alpha_{j,r}$ at the teacher representations and treat it as shared, which is accurate when the representation gap $\|\mathbf{z}^S - \mathbf{z}^T\|_F$ is small.
Applying Eq.~(\ref{eq:rep_grad}), the gradient difference becomes
\begin{equation}
\label{eq:grad_diff}
    \mathbf{g}_{S}(\boldsymbol{\theta}_r^{S}) -
    \mathbf{g}_{T}(\boldsymbol{\theta}_r^{S})
    = \mathbf{J}(\boldsymbol{\theta}_r^{S})^{\top}
      \,\mathrm{vec}\!\left(
          \sum_{j=1}^{K} \alpha_{j,r}
          \bigl(\mathbf{u}_j^{S}\mathbf{v}_j^{S\top}
          - \mathbf{u}_j^{T}\mathbf{v}_j^{T\top}\bigr)
      \right).
\end{equation}

\textbf{Step 3: Norm bound.}
Taking the Euclidean norm of Eq.~(\ref{eq:grad_diff}), applying submultiplicativity of the operator norm, the identity $\|\mathrm{vec}(\mathbf{M})\|_2 = \|\mathbf{M}\|_F$, and the triangle inequality in sequence,
\begin{align}
\label{eq:spectral_mismatch_app}
    \Delta_r
    \,&:=\,
    \left\|
        \mathbf{g}_{S}(\boldsymbol{\theta}_r^{S}) -
        \mathbf{g}_{T}(\boldsymbol{\theta}_r^{S})
    \right\|_2\\
    &\leq \|\mathbf{J}\|_{\mathrm{op}}
    \left\|\sum_{j=1}^{K} \alpha_{j,r}
    \bigl(\mathbf{u}_j^{S}\mathbf{v}_j^{S\top}
    - \mathbf{u}_j^{T}\mathbf{v}_j^{T\top}\bigr)\right\|_F \notag \leq C \sum_{j=1}^{K} |\alpha_{j,r}|\,\varepsilon_{j,r},
\end{align}
where $\varepsilon_{j,r} := \|\mathbf{u}_j^{S}\mathbf{v}_j^{S\top} - \mathbf{u}_j^{T}\mathbf{v}_j^{T\top}\|_F$ is the mode-$j$ approximation error, and $C$ is the constant from Assumption~\ref{assum:spectral}(ii).
\end{proof}

\subsection{Proof of Theorem~\ref{thm:eigen_selectivity}}
\label{app:proof_prop_eigen}
\begin{proof}
\textbf{Step 1: Lemma~\ref{lem:local_to_segment_main} backbone.}
Applying Lemma~\ref{lem:local_to_segment_main},
\begin{equation}
\label{eq:app_backbone}
\|\boldsymbol{\theta}_n^S - \boldsymbol{\theta}_n^T\|_2
\le \eta\sum_{r=0}^{n-1}(1+\eta L)^{n-1-r}\Delta_r.
\end{equation}

\textbf{Step 2: Substitute the spectral mismatch model.}
Inserting Eq.~(\ref{eq:spectral_mismatch_app}) into Eq.~(\ref{eq:app_backbone}),
\begin{equation}
\label{eq:app_general}
\|\boldsymbol{\theta}_n^S - \boldsymbol{\theta}_n^T\|_2
\le \eta C\sum_{r=0}^{n-1}(1+\eta L)^{n-1-r}
\sum_{j=1}^K |\alpha_{j,r}|\,\varepsilon_{j,r},
\end{equation}
establishing Eq.~(\ref{eq:bound_general_main}).

\textbf{Step 3: Comparison $U_A \le U_B$.}
Since $\alpha^A_{1,r}=\alpha^B_{1,r}$ for all $r$ and
$\alpha^A_{j,r}=0$ for all $j\ge 2$, at each step $r$:
\begin{align}
|\alpha^A_{1,r}|\,\varepsilon_{1,r}
= |\alpha^B_{1,r}|\,\varepsilon_{1,r}
\le \sum_{j=1}^K |\alpha^B_{j,r}|\,\varepsilon_{j,r}.
\end{align}
Multiplying by the positive weight $\eta C(1+\eta L)^{n-1-r}$ and summing over $r=0,\ldots,n-1$ gives $U_A\le U_B$.
The inequality is strict whenever $|\alpha^B_{j,r}|\,\varepsilon_{j,r}>0$ for some $j\ge 2$ at some step $r$.
\end{proof}

\section{Supplementary Implementation Details}
\subsection{Datasets}\label{sec:dataset_statistics}
In the main paper, we evaluate our method on MSR-VTT~\citep{xu2016msr}, VGGSound-S~\citep{chen2020vggsound}, and DiDeMo~\citep{anne2017didemo}. The details of the datasets are summarized below. 
\begin{itemize}
    \item MSR-VTT is a benchmark from Microsoft Research for video captioning. It contains 10,000 web video clips and about 200,000 clip‑sentence pairs spanning 20 categories, with roughly 20 captions per clip. We use the split MSR-VTT 1kA following Imagebind~\citep{girdhar2023imagebind}.
    \item VGGSound-S is a curated subset of the original VGGSound dataset. VGGSound is an audio–visual benchmark created by the University of Oxford’s Visual Geometry Group. It consists of more than 200,000 ten‑second YouTube clips, each depicting a sound source that is visible in the video and labeled with one of 310+ sound‑event classes. The full collection spans over 550 hours of audio‑visual content. To construct VGGSound‑S, we randomly sampled 10,000 examples from the official training set and 2,000 examples from the validation set, yielding a smaller dataset that preserves the diversity of the original while enabling more efficient multi‑modal retrieval experiments.
    \item DiDeMo is a collection of more than 10,000 unedited personal videos scraped from Flickr, each roughly 25–30s long with an accompanying audio track.  Each video comes with multiple natural‑language descriptions (about 3–5 per clip) written by annotators. These descriptions were validated so that they uniquely match the described clip. In total, the dataset offers more than 40,000 (description, video) pairs and spans a wide range of real‑world content such as pets, concerts, and sports. We use the official split, which contains 8,395 training videos, 1,065 validation videos, and 1,004 test videos.
\end{itemize}

\subsection{Hyperparameter Settings}\label{sec:hyperparameters}
The hyperparameter settings are summarized in Table~\ref{tab:buffer_settings} and Table~\ref{tab:distillation_settings}. 

\begin{table}[h]
\centering
\caption{\textbf{Hyperparameter settings for buffer.}}
\label{tab:buffer_settings}
\begin{tabular}{l|ccc}
\toprule
Dataset & MSR-VTT & VGGSound-S & DiDeMo \\ 
\midrule
epochs & 10 & 10 & 10 \\
num\_experts & 20 & 20 & 20 \\
batch\_size & 128 & 128 & 128 \\
lr\_teacher & 0.01 & 0.01 & 0.01 \\
$\tau$ & 0.1  & 0.1  & 0.1 \\
$\tau'$ & 0.2  & 0.2  & 0.2 \\
\bottomrule
\end{tabular}
\end{table}

\begin{table}[t] 
\centering
\caption{\textbf{Hyperparameter settings for dataset distillation.}}
\label{tab:distillation_settings}
\begin{tabular}{l|ccc|ccc|ccc}
\toprule
Dataset & \multicolumn{3}{c|}{MSR-VTT} & \multicolumn{3}{c|}{VGGSound-S} & \multicolumn{3}{c}{DiDeMo} \\\midrule
\#instances & 100 & 200 & 500 & 100 & 200 & 500 & 100 & 200 & 500 \\
\midrule
lr\_data & 100 & 100 & 100 & 100 & 100 & 100 & 100 & 100 & 100 \\
lr\_lr & 1e-4 & 1e-4 & 1e-4 & 1e-4 & 1e-4 & 1e-4 & 1e-4 & 1e-4 & 1e-4 \\
lr\_sim & 10.0 & 10.0 & 100.0 & 10.0 & 10.0 & 100.0 & 10.0 & 10.0 & 100.0 \\
lr\_teacher & 0.01 & 0.01 & 0.01 & 0.01 & 0.01 & 0.01 & 0.01 & 0.01 & 0.01 \\
sim\_rank & 10 & 20 & 20 & 10 & 20 & 40 & 10 & 20 & 20 \\
sim\_alpha & 1.0 & 0.5 & 0.01 & 1.0 & 1.0 & 0.1 & 1.0 & 0.5 & 0.05 \\
mini\_batch\_size & 25 & 25 & 25 & 25 & 25 & 25 & 25 & 25 & 25 \\
\midrule
$\tau$ & 0.1 & 0.1 & 0.1 & 0.1 & 0.1 & 0.1 & 0.1 & 0.1 & 0.1 \\
$\tau'$ & 0.2 & 0.2 & 0.2 & 0.2 & 0.2 & 0.2 & 0.2 & 0.2 & 0.2 \\
\midrule
iterations & 2,000 & 2,000 & 3,000 & 2,000 & 2,000 & 3,000 & 2,000 & 2,000 & 3,000 \\
syn\_steps & 16 & 16 & 16 & 16 & 16 & 16 & 16 & 16 & 16 \\
expert\_epochs & 2 & 2 & 2 & 2 & 2 & 2 & 2 & 2 & 2 \\
max\_start\_epoch & 5 & 5 & 7 & 5 & 5 & 7 & 5 & 5 & 7 \\
\bottomrule
\end{tabular}
\end{table}

\subsection{Algorithm Flow}\label{sec:algorithm_flow}
For a better understanding of the technical details of our method, we provide an algorithm flow in Algorithm~\ref{alg:omni_dd}.

\begin{algorithm}[t]
\caption{The technical flow of the HoPA framework}
\label{alg:omni_dd}
\begin{algorithmic}[1]
\REQUIRE
    \textbf{Inputs:} \\
    \quad The real dataset $\gD = \{(\vx_i^{m_1}, \vx_i^{m_2}, \dots, \vx_i^{m_k})\}_{i=1}^N$ with $k = |\gM|$ modalities per instance. \\
    \quad Expert trajectory buffer $\mathcal{B}$. \\
    \quad Modality-specific encoders $\{E_m\}_{m\in\gM}$.\\
    \quad Student rollout steps $t$, segment length $T$, learning rates $\eta, \alpha$, the number of total iterations $Iter$.
\ENSURE Synthetic set $\gD_{\text{syn}}$ and learnable synthetic similarity matrix $\mS_e$.

\vspace{3pt}
\STATE Initialize synthetic data $\gD_{\text{syn}}$ and learnable parameters $\boldsymbol{\phi}_S$ for matrix $\mS_e$; 

\FOR{$it = 1$ to $Iter$}
    \STATE Sample a trajectory segment $(\boldsymbol{\theta}_0, \boldsymbol{\theta}_T) \sim \mathcal{B}$;
    \STATE Initialize student parameters $\boldsymbol{\theta}^{\mathrm{S}}_0 \leftarrow \boldsymbol{\theta}_0$; 

    \STATE \textcolor{darksilver}{// Simulate Training on Synthetic Data}
    \FOR{$r = 1$ to $t$}
        \STATE For each synthetic instance $i$, compute multimodal representations $\vz^{(i)} = [E_m(\vx_{\text{syn},i}^{m}; \boldsymbol{\theta}_{r-1,m}^{\mathrm{S}})]_{m\in\gM}^{\top}$; 
        
        \STATE For each instance $i$, compute SVD: $\vz^{(i)} = \sum_{l=1}^{k}\sigma_{l}^{(i)} \vu_{l}^{(i)} \vv_{l}^{(i)\top}$, and extract $\sigma_1^{(i)}$, $\vv_1^{(i)}$;
        
        \STATE  \textcolor{darksilver}{// Compute Combined Objective (Eq.~(\ref{eq:inner_target}))}
        \STATE $\gL_\mathrm{M} \leftarrow -\frac{1}{|\gD_{\text{syn}}|}\sum_i \log \frac{\exp(\sigma_1^{(i)}/\tau)}{\sum_{l=1}^{k}\exp(\sigma_{l}^{(i)}/\tau)}$; 
        \STATE Construct proxy similarity matrix with $\tilde{s}_{ij} = \vv_1^{(i)\top}\vv_1^{(j)}$; 
        \STATE $\gL_{\mathrm{wBCE}} \leftarrow \gL_{\mathrm{wBCE}}(\mS_e)$; 
        \STATE $\gL_{\text{inner}} \leftarrow \gL_\mathrm{M} + \gL_{\mathrm{wBCE}}$;
        
        \STATE $\boldsymbol{\theta}_{r}^{\mathrm{S}} \leftarrow \boldsymbol{\theta}_{r-1}^{\mathrm{S}} - \eta \nabla_{\boldsymbol{\theta}_{r-1}^{\mathrm{S}}} \gL_{\text{inner}}$;
    \ENDFOR

    \STATE  \textcolor{darksilver}{// Trajectory Matching (Eq.(~\ref{eq:omni_traj}))}
    \STATE $\gL_{\mathrm{match}} \leftarrow \sum_{m\in\gM}\frac{\|\boldsymbol{\theta}_{t,m}^{\mathrm{S}} - \boldsymbol{\theta}_{T,m}\|_2^2}{\|\boldsymbol{\theta}_{0,m} - \boldsymbol{\theta}_{T,m}\|_2^2}$; 
    \STATE Update $\gD_{\text{syn}}$ and $\boldsymbol{\phi}_S$ using gradient descent with learning rate $\alpha$ on $\gL_{\mathrm{match}}$;
\ENDFOR

\STATE \textbf{Return:} $\gD_{\text{syn}}$ and $\mS_e$.
\end{algorithmic}
\end{algorithm}

\section{Additional Results}
\subsection{Effect of Each Learning Objective}
\label{appendix:learning_objective}
We report the full results of the ablation study for each component of our learning objective in Table~\ref{tab:appendix_ablation_full}. 

\begin{table*}[h]
\centering
\caption{\textbf{Complete ablation study results.} We report Recall@1, Recall@5, and Recall@10 (\%) on MSR-VTT, VGGSound-S, and DiDeMo across varying synthetic sizes. The best results in each setting are marked in \textbf{bold}.}
\label{tab:appendix_ablation_full}

\begin{tabular}{cclccc}
\toprule
\textbf{Dataset} & \textbf{Triplets} & \textbf{Method Variant} & \textbf{R@1} & \textbf{R@5} & \textbf{R@10} \\
\midrule
\multirow{12}{*}{\textbf{MSR-VTT}} 
& \multirow{4}{*}{100} 
  & Ours (Full Model) & \textbf{21.78} & \textbf{42.43} & \textbf{52.28} \\
& & \textit{w/o} Similarity Mining & 20.67 & 40.30 & 51.02 \\
& & \textit{w/o} Instance Loss ($\gL_{\text{wBCE}}$) & 20.16 & 38.75 & 49.42 \\
& & \textit{w/o} Modality Loss ($\gL_\mathrm{M}$) & 20.08 & 37.96 & 48.03 \\
\cmidrule{2-6}
& \multirow{4}{*}{200} 
  & Ours (Full Model) & \textbf{21.93} & \textbf{42.73} & \textbf{52.61} \\
& & \textit{w/o} Similarity Mining & 21.02 & 41.18 & 51.92 \\
& & \textit{w/o} Instance Loss ($\gL_{\text{wBCE}}$) & 20.31 & 38.80 & 49.48 \\
& & \textit{w/o} Modality Loss ($\gL_\mathrm{M}$) & 19.44 & 37.15 & 45.54 \\
\cmidrule{2-6}
& \multirow{4}{*}{500} 
  & Ours (Full Model) & \textbf{22.25} & \textbf{42.96} & \textbf{52.96} \\
& & \textit{w/o} Similarity Mining & 21.32 & 41.65 & 52.29 \\
& & \textit{w/o} Instance Loss ($\gL_{\text{wBCE}}$) & 20.11 & 38.42 & 48.79 \\
& & \textit{w/o} Modality Loss ($\gL_\mathrm{M}$) & 19.20 & 36.83 & 45.92 \\
\midrule
\midrule
\multirow{12}{*}{\textbf{VGGSound-S}} 
& \multirow{4}{*}{100} 
  & Ours (Full Model) & \textbf{13.19} & \textbf{34.89} & \textbf{46.72} \\
& & \textit{w/o} Similarity Mining & 12.96 & 34.03 & 45.72 \\
& & \textit{w/o} Instance Loss ($\gL_{\text{wBCE}}$) & 12.35 & 33.06 & 44.28 \\
& & \textit{w/o} Modality Loss ($\gL_\mathrm{M}$) & 11.35 & 28.81 & 38.51 \\
\cmidrule{2-6}
& \multirow{4}{*}{200} 
  & Ours (Full Model) & \textbf{13.15} & \textbf{35.33} & \textbf{47.34} \\
& & \textit{w/o} Similarity Mining & 12.96 & 34.49 & 46.46 \\
& & \textit{w/o} Instance Loss ($\gL_{\text{wBCE}}$) & 12.44 & 32.88 & 43.95 \\
& & \textit{w/o} Modality Loss ($\gL_\mathrm{M}$) & 11.65 & 30.41 & 40.37 \\
\cmidrule{2-6}
& \multirow{4}{*}{500} 
  & Ours (Full Model) & \textbf{13.18} & \textbf{35.85} & \textbf{48.13} \\
& & \textit{w/o} Similarity Mining & 12.89 & 34.84 & 46.50 \\
& & \textit{w/o} Instance Loss ($\gL_{\text{wBCE}}$) & 12.41 & 32.48 & 43.62 \\
& & \textit{w/o} Modality Loss ($\gL_\mathrm{M}$) & 12.06 & 32.04 & 42.81 \\
\midrule
\midrule
\multirow{12}{*}{\textbf{DiDeMo}} 
& \multirow{4}{*}{100} 
  & Ours (Full Model) & \textbf{18.35} & \textbf{37.21} & \textbf{47.25} \\
& & \textit{w/o} Similarity Mining & 17.13 & 35.57 & 45.82 \\
& & \textit{w/o} Instance Loss ($\gL_{\text{wBCE}}$) & 15.98 & 34.02 & 43.64 \\
& & \textit{w/o} Modality Loss ($\gL_\mathrm{M}$) & 16.36 & 34.97 & 44.81 \\
\cmidrule{2-6}
& \multirow{4}{*}{200} 
  & Ours (Full Model) & \textbf{18.61} & \textbf{37.64} & \textbf{47.32} \\
& & \textit{w/o} Similarity Mining & 17.45 & 36.72 & 46.83 \\
& & \textit{w/o} Instance Loss ($\gL_{\text{wBCE}}$) & 15.86 & 34.00 & 43.55 \\
& & \textit{w/o} Modality Loss ($\gL_\mathrm{M}$) & 16.15 & 34.20 & 44.12 \\
\cmidrule{2-6}
& \multirow{4}{*}{500} 
  & Ours (Full Model) & \textbf{18.46} & \textbf{37.76} & \textbf{47.59} \\
& & \textit{w/o} Similarity Mining & 17.33 & 36.99 & 47.13 \\
& & \textit{w/o} Instance Loss ($\gL_{\text{wBCE}}$) & 15.50 & 33.74 & 43.14 \\
& & \textit{w/o} Modality Loss ($\gL_\mathrm{M}$) & 15.64 & 33.42 & 42.47 \\
\bottomrule
\end{tabular}
\end{table*}

\section{Reproducibility}
We have provided implementation details, involving illustrative algorithm descriptions in Sec.~\ref{sec:method}  and pseudo-code in Algorithm~\ref{alg:omni_dd}. The source code will be publicly released for reproducibility.

\section{Visualization}
\label{appendix:visualization}
We provide visualizations of $N=100$ distilled video-audio-text instances from the VGGSound-S dataset in Figure~\ref{fig:visualization} before and after distillation. Following~\citep{wu2023vision}, the texts we present are retrieved by the closest matching captions in the train set to the distilled text embeddings.

\begin{figure}
    \centering
    \includegraphics[width=0.8\linewidth]{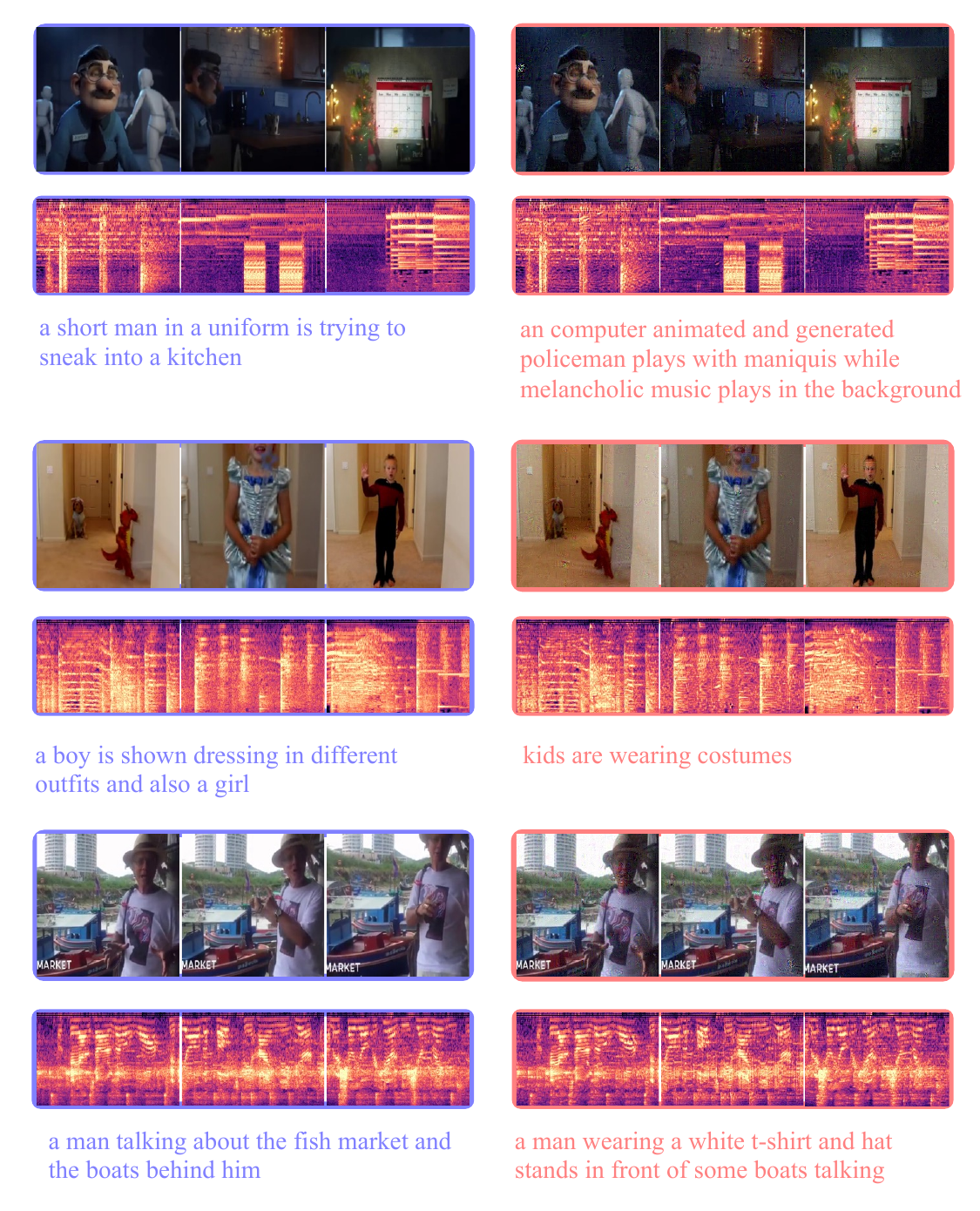}
    \caption{Examples of distilled instances from the VGGSound-S dataset with $N=100$ triplets.}
    \label{fig:visualization}
\end{figure}

\section{Limitations}
Although the proposed method is more efficient than existing multimodal baselines by avoiding exhaustive pairwise modality interactions, it still incurs non-negligible computational overhead due to the use of SVD for constructing the omnimodal proxy. In particular, SVD can become a bottleneck when scaling to very large datasets or a high number of modalities. Further improving the efficiency of the low-rank approximation~\cite{allen2016lazysvd}, for example, through incremental, randomized, or approximate decomposition techniques, is an important direction for future work.

\section{Use of LLMs in Writing}
We used an LLM solely to polish the writing and correct grammatical issues during the preparation of this submission. The LLM was not involved in any idea generation, experiment design, or analysis. All scientific contributions are entirely made by the authors.